\theoremstyle{plain}
\newtheorem{theorem}{Theorem}[section]
\newtheorem{lemma}[theorem]{Lemma}
\theoremstyle{definition}
\newtheorem{definition}[theorem]{Definition}
\theoremstyle{remark}
\newtheorem{remark}[theorem]{Remark}
\begin{document}
%\linespread{1.2}\selectfont

\title{\bf{Sequential Least-Squares Estimators with Fast Randomized Sketching for Linear Statistical Models}
	\thanks{First author: Guan-Yu Chen (chenguanyuu@nuaa.edu.cn); Corresponding author: Xi Yang (yangxi@nuaa.edu.cn).}}

\author{
	Guan-Yu Chen\thanks{School of Mathematics, Nanjing University of Aeronautics and Astronautics, Nanjing 211106, China.}
	, Dong-Yue Xie\footnotemark[2]
	, Xi Yang\footnotemark[2]
}
%\date{July 1, 2003}
\maketitle
\begin{center}
\begin{abstract}
	We propose a novel randomized framework for the estimation problem of large-scale linear statistical models, namely Sequential Least-Squares Estimators with Fast Randomized Sketching (SLSE-FRS), which integrates \textit{Sketch-and-Solve} and \textit{Iterative-Sketching} methods for the first time. By iteratively constructing and solving sketched least-squares (LS) subproblems with increasing sketch sizes to achieve better precisions, SLSE-FRS gradually refines the estimators of the true parameter vector, ultimately producing high-precision estimators. We analyze the convergence properties of SLSE-FRS, and provide its efficient implementation. Numerical experiments show that SLSE-FRS outperforms the state-of-the-art methods, namely the Preconditioned Conjugate Gradient (PCG) method, and the Iterative Double Sketching (IDS) method.
	\bigskip
	
	\noindent{\bf Keywords:} least-squares estimation, randomized sketching, Sketch-and-Solve, Iterative-Sketching
	
	\noindent{\bf MSC codes:} 65F10, 65F20, 65K05, 68W20
\end{abstract}
\end{center}

%\vfill
%\par
\section{Introduction}
\label{Introduction}
Linear regression analysis is one of the most classic and fundamental methods used to describe the relationships between variables. Suppose that there exists a standard linear relationship between the response vector \( Y \in \mathbb{R}^{N}\) and the feature matrix \( X \in \mathbb{R}^{N \times d}\) with the sample size $N$ and the feature size $d$ as follows
\begin{equation}\label{linear_model}
	Y = X \beta + \zeta,  \nonumber
\end{equation}
where \( \beta \in \mathbb{R}^d \) is the unknown true parameter vector to be estimated and \( \zeta \in \mathbb{R}^{N}\) represents the random noise vector with zero mean and a covariance matrix \(\sigma^2I_N \).

To learn the unknown parameter \( \beta \), we consider the ordinary least-squares (OLS) estimator \(\hat{\beta}\),
\begin{equation}\label{main1}
	\hat{\beta} = \underset{\beta \in \mathbb{R}^d}{\arg \min}\ f(\beta;X,Y),
\end{equation}
with \(f(\beta;X, Y) = \frac{1}{2}\|Y-X\beta\|^{2}_{2}\). Throughout the paper, we assume that \( X \) has full column rank, then the OLS estimator can be explicitly formulated as
\begin{equation}
	\hat{\beta} = (X^{\top}X)^{-1}X^{\top}Y.  \nonumber
\end{equation}
%\citep{do2012numerically}
Due to its numerous well-established and favorable statistical properties \citep{chatterjee2009sensitivity}, the OLS estimator is widely adopted to estimate the parameter \(\beta\) in practice. However, for large-scale problems with \(N \gg d\), the direct computational complexity \( O(Nd^2) \) to obtain \(\hat{\beta}\) becomes prohibitive. To address this challenge, numerous randomized algorithms based on \textit{sketching} methods have been developed to obtain an approximation of the OLS estimator efficiently.

The first classical randomized line to reduce the computational cost, known as \textit{Sketch-and-Solve} \citep{drineas2011faster,sarlos2006improved}, is using \textit{sketching} matrix \(S \in \mathbb{R}^{m \times N}\) with \(m \ll N\) to construct the \textit{sketched data} \((SX, SY)\) of the original data \((X, Y)\). Rather than solving problem (\ref{main1}) for the OLS estimator from large-scale data \((X,Y)\), one can solve the following \textit{smaller} sketched LS problem to obtain the sketched LS estimator \(\tilde{\beta}\) as an approximation,
\begin{equation}
	\tilde{\beta} = \underset{\beta \in \mathbb{R}^d}{\arg \min}\ f(\beta;SX, SY),
	\nonumber
\end{equation}
with \(f(\beta;SX, SY) = \frac{1}{2}\|SY-SX\beta\|^{2}_{2}\). Then the direct methods can be called to compute the sketched estimator \(\tilde{\beta}\) within \(O(md^2)\) time.

As the \textit{suboptimality} illustrated in \citep{pilanci2016iterative}, any \textit{Sketch-and-Solve} methods based on only observing a single pair of sketched data \((SX, SY)\), unless the sketch size \(m \geq N\), necessarily has a substantially larger error than the OLS estimator.  In other words, with a small sketch size \(m\), the \textit{Sketch-and-Solve} methods result in estimators with relatively low precision.

The second widely adopted line is \textit{Iterative Sketching}, which involves repeatedly sketching the problem and iteratively refining the estimator. Recently, an \textit{Iterative Sketching} method has attracted significant attention, the Iterative Hessian Sketch (IHS) \citep{pilanci2016iterative} and its variants. IHS is an effective and efficient iterative sketching method for large-scale LS problems, which uses refreshed sketched Hessian matrix \(H_t = X^{\top}S_t^{\top}S_tX\) to approximate the Hessian matrix \(H = X^{\top}X\) of (\ref{main1}). The update formula can be expressed as:
\begin{equation}
	\beta_{t+1} = \beta_{t} - H_t^{-1} \nabla f({\beta}_{t}; {X}, {Y}),  \nonumber
\end{equation}
where the sketching matrices \(S_0,\ldots, S_t,\ldots\) are independent and identically distributed (i.i.d.) of size \(m \times N\), with \(m \ll N\) and \(\nabla f({\beta}; {X}, {Y}) := X^{\top}(X\beta - Y)\). The theoretical results in \citep{derezinski2024recent} guarantee that, with high probability, the prediction error can decay at a constant rate and the output can serve as a high-precision estimator for \(\beta\).

In \citep{ozaslan2019iterative}, the convergence rate of IHS is significantly improved by incorporating a momentum term, leading to the momentum iterative Hessian sketch (M-IHS) method. M-IHS uses a fixed sketching matrix $\hat{S} \in \mathbb{R}^{m \times N}$ to approximate the Hessian matrix, denoted as \(\hat{H} = X^{\top}\hat{S}^{\top}\hat{S}X\). This approach avoids the repeated construction of the inverse of the Hessian sketch \(H_t^{-1}\). The update formula can be represented as:
\begin{equation}\label{10}
	\beta_{t+1} = \beta_{t} - \mu \hat{H}^{-1} \nabla f({\beta}_{t}; {X}, {Y}) + \eta(\beta_{t} - \beta_{t-1}), \nonumber
\end{equation}
Based on Marchenko-Pastur law, \citet{ozaslan2019iterative} investigated the optimal step sizes \(\mu\) and \(\eta\), achieving a data-independent convergence rate $(d/m)^{1/2}$. See more related work in \citep{tang2017gradient,lacotte2021adaptive,lacotte2020optimal,lacotte2021faster,na2023hessian,epperly2024fast}

Inspired by the results in \citep{dobriban2019asymptotics}, we noticed that the asymptotic precision of the sketched LS estimator of the \textit{Sketch-and-Solve} method can be explicitly formulated by a function of the sketch size \(m\), the sample size \(N\) and the feature size \(d\). For a fixed sketching matrix (e.g., Gaussian or SRHT), increasing \(m\) improves the precision of the estimator but also raises the computational costs. This theory appears to suggest that we can only strike a balance between improving accuracy and reducing computational complexity. In fact, it provides us with an opportunity to enhance the precision while simultaneously decreasing the computational costs.

We suggest applying the \textit{Sketch-and-Solve} method multiple times with a carefully constructed sequence of sketched LS subproblems with increasing sketch sizes. The precision of the estimators can be iteratively improved. Each sketched LS subproblem can be solved using any efficient iterative LS solver. Importantly, compared to directly applying the solver to the original problem, the cost of performing iterations in the sketched LS subproblems is significantly cheaper. If the solution of an appropriate precision for each subproblem can be obtained at a relatively low cost, this new idea will lead to a substantial reduction in the overall computational expense.

Therefore, we propose a novel framework, named Sequential Least-Squares Estimators with Fast Randomized Sketching (SLSE-FRS). SLSE-FRS repeatedly applies the \textit{Sketch-and-Solve} method with increasing sketch size to compute the estimators for the unknown parameter vector \(\beta\). To the best of our knowledge, this concept is proposed for the first time in this area. However, three key issues need to be addressed.

The first issue is how to construct the sequence of the sketched LS subproblems effectively and efficiently. Since the sketched LS subproblems are constructed using the sketched data $(S_iX, S_iY)$ for $i = 1, 2, \ldots$, where \(S_i \in \mathbb{R}^{m_{i}\times N}\) is the sketching matrix of the sketch size \(m_i\). Constructing each sketched LS subproblem necessitates accessing the original data once, resulting in a computational cost of at least $O(Nd)$. If we independently construct them, this cost becomes unacceptable. Moreover, selecting an appropriate sketch size is crucial, as it directly impacts both the precision and the computational cost.

The second issue is how to determine the stopping criterion for each sketched LS subproblem. Due to Theorem 1 in \citep{pilanci2016iterative}, the error between the sketched LS estimator and the true parameter vector \(\beta\) has a lower bound, limiting achievable precision. In each sketched subproblem, we only need to achieve this level of precision. In practice, the true parameter vector $\beta$ is unobservable. This theoretical quantity is not an available stopping criterion for the iterations. Thus, it is necessary to develop a theoretically rigorous and computationally feasible surrogate to serve as the appropriate stopping condition for each sketched LS subproblem.

The final issue is to ensure theoretically and numerically that SLSE-FRS achieves the same level of precision as the OLS estimator (the noise level or $\sigma$-level).

%In this paper, we present a detailed introduction to the SLSE-FRS framework, address the above three issues, and demonstrate its superior performance through theoretical analysis and numerical experiments.
\subsection{Contributions}
Motivated by the above considerations, we develop the SLSE-FRS framework and address the three issues discussed above in this paper. The main contributions of this paper are summarized as follows.

First, we present a detailed introduction to the SLSE-FRS framework for large-scale linear statistical models. To the best of our knowledge, this is the first framework that systematically unifies \textit{Sketch-and-Solve} with \textit{Iterative-Sketching} in a sequential manner. Fundamentally, SLSE-FRS follows a recursive ``small-sketch warmstart $+$ big-data polish'' paradigm. It constructs a sequence of sketched LS subproblems with increasing sketch sizes, solves each subproblem to its appropriate intrinsic precision, and uses the resulting estimator as the initialization for the next larger subproblem. This recursive initialization process produces a sequence of progressively refined estimators, followed by a final full-scale refinement stage to reach the OLS-level prediction precision. In this way, most iterations are performed on smaller sketched subproblems, while only a very small number of full-scale refinement steps are required, thereby substantially reducing the overall computational cost.

Second, we develop an efficient implementation based on the SLSE-FRS framework. Instead of independently constructing each sketched LS subproblem from the original data, we first construct the largest SRHT-based sketched data and then form the sequence of sketched LS subproblems by extracting row subsets of different sizes from it, avoiding repeated access to the original data. We further adopt M-IHS as the inner LS solver, which uses a fixed sketched Hessian as a randomized preconditioner and avoids repeated Hessian sketch inversions, thereby further improving the computational efficiency of SLSE-FRS. We emphasize that the SLSE-FRS framework is highly flexible: the inner LS solver, the sketching operator, and the sketch-size schedule can all be adjusted within the same sequential structure, making it possible to incorporate more suitable and efficient solvers or sketching strategies for problems with different structures. In fact, we have also developed other variants within this framework, such as SLSE-PCG, which exhibits similarly efficient performance, although this variant is beyond the scope of this paper.

Third, we establish the convergence theory for the efficient SLSE-FRS algorithm. After deriving the convergence bound for the M-IHS inner iterations in each sketched LS subproblem, we establish an overall error bound for the two-stage procedure. This analysis shows that the proposed algorithm can eventually achieve the same level of prediction precision as the OLS estimator. Based on the theoretical analysis, we further derive interpretable lower bounds for the inner iteration numbers \(a_i\), which lead to an explicit complexity estimate for SLSE-FRS. Moreover, the proof strategy and analytical approach provide a theoretical template for analyzing other variants derived from the SLSE-FRS framework.

Finally, we conduct numerical experiments to demonstrate the efficiency of SLSE-FRS. The experiments show that only a small number of inner iterations is needed for the sketched LS subproblems and confirm that SLSE-FRS reaches the OLS-level prediction precision. Compared with IDS, PCG, and M-IHS, SLSE-FRS exhibits faster convergence while significantly reducing the computing time. In addition, the experiments on sketch-size tuning and CountSketch demonstrate the flexibility of the proposed framework and its potential for faster convergence and higher computational efficiency. Related numerical results further indicate that the LS-subproblem sizes can be treated as tunable hyperparameters, and well-chosen sizes can improve computational efficiency. Therefore, learning better sketch-size parameters is a valuable topic for future research, but it is not pursued in this paper.

\subsection{Additional related work}\label{relatedworks}
For the problem (\ref{main1}), the summaries of the classical randomized sketching methods can be found in \citep{woodruff2014sketching,drineas2016randnla,martinsson2020randomized, derezinski2024recent} and references therein. In sketching methods, an appropriate sketching matrix is crucial, as it directly impacts the precision, the computational efficiency, and the stability of these methods. One of the most classical sketching matrix is a matrix \(S \in \mathbb{R}^{m \times N}\) with i.i.d. Gaussian entries \(\mathcal{N}(0, m^{-1})\), see \citep{indyk1998approximate}. Despite its simple structure, generating \(SX\) requires in general \(O(mNd)\) time, which is computationally more expensive than directly solving the problem (\ref{main1}).

Another popular and well-developed orthogonal sketching matrix \(S \in \mathbb{R}^{m \times N}\) is the subsampled randomized Hadamard transform (SRHT), see \citep{sarlos2006improved,2009The,drineas2006sampling}, which is based on the Walsh-Hadamard transform. Its recursive nature allows for efficiently computing the sketched data \((SX, SY)\) in \(Nd\log_2 N\) time, see \citep{woodruff2014sketching}. To further improve the computational efficiency, \citep{clarkson2017low} introduced a \textit{sparse projection} matrix, i.e., the CountSketch matrix $S \in \mathbb{R}^{m \times N}$. The columns of \(S\) are independent, and every column contains only one non-zero element. Due to its sparsity, the sketched data \((SX, SY)\) can be obtained within \(O(Nd)\) time.

\subsection{Organization}
The remainder of the paper is organized as follows. In \cref{sec2}, we introduce the general framework of SLSE-FRS. In \cref{sec:effcntSLSE}, we develop an efficient implementation of SLSE-FRS based on SRHT sketching operators and the M-IHS method. In \cref{sec:theorem}, we concern the convergence guarantee of SLSE-FRS and provide the choices of the inner iteration numbers together with the complexity analysis. Numerical experiments are given in \cref{sec:experiment}. \cref{sec:conclusion} concludes the paper with several remarks and future research directions.

\subsection{Notation}
Throughout the paper, we denote by \( \|\beta\| := \|\beta\|_2 \) the Euclidean norm of a vector \( \beta \). For a matrix \(G\), \(\|G\|\), \(\rho(G)\), and \(G^\top\) denote its spectral norm, spectral radius, and transpose, respectively. The identity matrix of size \(d\) is denoted by \(I_d\). The expectation \(\mathbb{E}[\cdot]\) is taken with respect to the random noise \(\zeta\), while the probabilities associated with sketching are taken with respect to the randomness of the corresponding sketching matrices. All logarithms are natural logarithms, and \(e\) denotes the base of the natural logarithm.

\section{The SLSE-FRS framework}\label{sec2}
In this section, we will introduce the general framework of SLSE-FRS, a new iterative method to repeatedly apply the \textit{Sketch-and-Solve} methods to obtain the estimators for the true parameter vector \( \beta \).

The inspiration comes from \citep{dobriban2019asymptotics}, which analyzed the limits of precision loss incurred by the popular \textit{Sketch-and-Solve} methods. We consider one of the loss functions, named the \textit{relative prediction efficiency} (\text{PE}).
\begin{equation}
	\text{PE}=\frac{\mathbb{E}\|X \tilde{\beta}-X \beta\|^{2}}{\mathbb{E}\|X \hat{\beta}-X \beta\|^{2}}.
\end{equation}
PE measures the precision loss between the sketched LS estimator \(\tilde{\beta}\) and the OLS estimator \(\hat{\beta}\) due to the sketching in the \textit{Sketch-and-Solve} methods. Given data \(X\), PE depends on the sketch size \(m\), the sample size \(N\), and the feature size \(d\). PE decreases with an increasing sketch size \(m\), yielding a higher-precision estimator. This naturally motivates the idea of constructing a sequence of sketched LS subproblems with increasing sketch sizes to compute estimators with progressively higher precisions.

Popular LS solvers iteratively solve the original problem (\ref{main1}). In contrast, we suggest to iteratively solve a sequence of relatively small-scale sketched LS subproblems. Our framework proceeds in two stages. The 1st stage involves constructing and iteratively solving the sketched LS subproblems to compute estimators with progressively higher precisions. The solution of one sketched LS subproblem is used as the initial guess for the next. If the iterates can follow a sufficiently accurate path toward the true parameter vector \(\beta\), our method can greatly reduce the computational cost. In the 2nd stage, we solve the full-scale LS problem to refine the estimator to the OLS-level precision. Since we already have an approximate estimator with a reasonable level of precision at the 1st stage, the 2nd stage can be completed within a few iterations at a very low cost. The goal is to achieve the precision of the OLS estimator with the lowest computational cost.

The new framework can be regarded as an inner-outer iteration method, where solving the sequence of LS subproblems constitutes the outer iteration, and the iteration for each individual LS subproblem forms the inner iteration. In detail, we construct \( K \) sketched LS subproblems,
\begin{equation}\label{sketchedproblem}
	\min \limits_{\beta \in \mathbb{R}^d}\ \frac{1}{2}\|S_{i}X \beta - S_{i}Y \|^{2} , i = 1, \ldots, K,
\end{equation}
where \(S_i \in \mathbb{R}^{m_i \times N}\) is the \(i\)-th sketching matrix and \( (\tilde{X},\tilde{Y}) := (S_{i}X,S_{i}Y) \) is the sketched data. The \(i\)-th sketched LS estimator can be represented as
\begin{equation}
	\tilde{\beta}^i = (\tilde{X}^{\top}\tilde{X})^{-1}\tilde{X}^{\top}\tilde{Y}.
\end{equation}
The sketched LS estimator \(\tilde{\beta}^i\) can be viewed as an approximate estimator for the true parameter vector \(\beta\). Therefore, we consider achieving the same level of precision of \(\tilde{\beta}^i\) when an iterative LS solver is applied to the sketched LS subproblem (\ref{sketchedproblem}). By defining the expected prediction error of the \(i\)-th exact sketched LS estimator \(\tilde{\beta}^i\) relative to \( \beta \) as below
\begin{equation}
	\delta_i := \mathbb{E}\|X(\tilde{\beta}^i - \beta)\|,
\end{equation}
for the \(i\)-th LS subproblem, we hope the adopted iterative LS solver can return an estimator ${\beta}^{i}$ satisfying
\begin{equation}\label{11}
	\mathbb{E}\|X({\beta}^{i} - \beta)\| < (1+\omega) \delta_i,
\end{equation}
where $\omega\in(0,1)$ represents a prescribed tolerance, which means that the returned estimator is required to attain a prediction precision comparable to that of the exact sketched LS estimator in the \(i\)-th LS subproblem.

At this point, we have completed the 1st stage, and obtained $K$ estimators \(\{\beta^i\}_{i=1}^K\) with progressively higher precisions. Since we aim to achieve the precision of the OLS estimator, due to the suboptimality of the \textit{Sketch-and-Solve} method, we have to move on to the 2nd stage, namely apply an iterative LS solver to the full-scale LS problem. Therefore, we utilize the estimator \(\beta^K\) from the iterative solution of the \(K\)-th sketched LS subproblem as the initial guess, which is a high-quality estimator for \(\beta\), only a few additional iterations are required to achieve the OLS precision, leading to significant cost savings.

In the SLSE-FRS framework, we only need to ensure that the iterative solution error of the \(i\)-th sketched LS subproblem reaches the order of \(\delta_i\) for \(i=1,\ldots,K\). Therefore, SLSE-FRS is not limited to any specific LS iterative solver but is compatible with all efficient LS iterative solvers. This compatibility is very powerful as it allows for the seamless integration of any current and future efficient iterative LS solvers within SLSE-FRS, thereby providing users with greater flexibility. Additionally, compared to directly using LS iterative solvers on the original full-scale LS problem, SLSE-FRS significantly reduces computational costs by applying iterative LS solvers on the small-scale sketched LS subproblems. We summarize the general SLSE-FRS framework in Algorithm \ref{Alg-framework}, where \(a_i\) represents the number of iterations performed for the \(i\)-th sketched LS subproblem to meet the stopping criterion, namely the condition (\ref{11}).

\begin{algorithm}[t]
	\caption{The SLSE-FRS framework}
	\label{Alg-framework}
	\begin{algorithmic}
		\STATE {\bfseries Input:} $X \in \mathbb{R}^{N \times d}, Y \in \mathbb{R}^{N}, \{(S_iX,S_iY)\}_{i = 1}^{K}$,
		\STATE $T, T^{\dagger}\leftarrow\sum\limits_{i=1}\limits^{K} a_i, t \leftarrow 0$
        \STATE \#\#\# {\textsc{1st stage}} \#\#\#
		\FOR{$i\leftarrow1$ {\bfseries to} $K$}
		  \FOR{$j\leftarrow1$ {\bfseries to} $a_i$}
		      \STATE $\beta_{t+1} \leftarrow \text{LS\_Solver}(\beta_t, S_i X, S_i Y)$
		      \STATE $t\leftarrow t+1$
		  \ENDFOR
		\ENDFOR
        \STATE \#\#\# {\textsc{2nd stage}} \#\#\#
		\FOR{$t\leftarrow T^{\dagger}$ {\bfseries to} $T$}
		  \STATE $\beta_{t+1} \leftarrow \text{LS\_Solver}(\beta_t, X, Y)$
		\ENDFOR
		\STATE Return $\beta_{T}$
	\end{algorithmic}
\end{algorithm}

Now, we consider the efficient construction of the \(K\) sketched LS subproblems. The error \( \delta_i \) should gradually approach the precision of the OLS estimator, which means that the sketch size should increase according to a specific pattern. %Meanwhile, due to the suboptimality, we must perform a certain number of iterations using the full-size data \((X, Y)\). Therefore, the sketch size \(m_i\) should gradually increase to the full data size \(N\).
We adopt a similar idea of obtaining the sequence of the sketched data in \citep{wang2022iterative}. Specifically, the data $(S_iX, S_iY)$ can be easily extracted from the data $(S_{i+1}X, S_{i+1}Y)$, and the sketch sizes satisfy \(m_2/m_1 = \cdots = m_K/m_{K-1}\).

Moreover, different LS solvers may adopt different stopping criteria when solving the LS subproblems. In Section \ref{sec:effcntSLSE}, we will introduce an efficient implementation of the SLSE-FRS framework based on the M-IHS algorithm, providing stopping criteria and convergence analysis. This can serve as a reference for establishing relevant stopping criteria and convergence analysis when adopting other LS solvers for the LS subproblems.

\section{An Efficient SLSE-FRS}\label{sec:effcntSLSE}
After introducing the general framework of SLSE-FRS, we now present an efficient implementation. First, we introduce the concept of subspace embedding, which is of extreme importance in \textit{sketching} methods.
\begin{definition}
	Let \(X\) be a matrix of size \(N \times d\), and let \(U\) be consisted of orthonormal columns which form the basis of the column subspace of \(X\). Then, for \(\epsilon>0\), a matrix \(S\) of size \(m \times N\) is a \((1+\epsilon)\) subspace embedding for \(X\) if \(
	\left\| U^\top S^\top S U - I_d \right\| \leq \epsilon.
	\)
\end{definition}

Among the three sketching matrices introduced in Section \ref{relatedworks}, theoretically, we tend to prefer the Gaussian and SRHT sketching matrices. Although the CountSketch matrix is fast to apply, it requires sketch size \(m = \Omega(d^2)\) to guarantee a valid subspace embedding, whereas Gaussian or SRHT only needs \(m = O(d\log d)\). Additionally, for a given success probability of \(1 - \delta\), CountSketch requires a larger sketch size compared to Gaussian and SRHT. In other words, for a specified sketch size \(m\), CountSketch exhibits a higher failure probability than the other two sketching matrices.

By considering the trade-off between sketching time and theoretical guarantee, we choose to implement SLSE-FRS with SRHT. Here we construct the SRHT matrix in a similar fashion as \citep{dobriban2019asymptotics}. For an integer \(N = 2^p\) with \(p \geq 1\), We define the \(N \times N\) SRHT matrix \(S = (N/m)^{1/2}BHDP\), where \(B\) is the \(N \times N\) diagonal random sampling matrix with i.i.d. Bernoulli random variables with success probability \(m/N\), \(H\) is a normalized Hadamard matrix,
% \(H_n =
% \frac{1}{\sqrt{2}}
%\begin{bmatrix}
%	H_{n/2} & H_{n/2} \\
%	H_{n/2} & -H_{n/2}
%\end{bmatrix} \)
%with \(H_1 = 1\),
\(D \in \mathbb{R}^{N \times N}\) is a diagonal matrix with i.i.d. Rademacher random variable, and \(P \in \mathbb{R}^{N \times N}\) is a uniformly distributed permutation matrix. Lastly, we retain the non-zero rows of the matrix \(S\), which forms the SRHT matrix, and still denote it as \(S\). The following theorem implies the embedding property of the SRHT sketching matrix.
\begin{theorem}
	\citep{wang2022iterative}
	\label{thm:bigtheorem}
	Let \( S \) be a SRHT sketching matrix of size \( m \times N \) . Let \( N \) be a power of 2, \( U \in \mathbb{R}^{N \times d} \) be a column orthonormal matrix, and \( \epsilon, \delta \in (0,1) \). If
	\[
	m \geq c\epsilon^{-2}\left(d + \log\left(\frac{N}{\delta}\right)\right)\log\left(\frac{ed}{\delta}\right),
	\]
	where \( c > 0 \) is a constant, it holds that
	\[
	\Pr\left\{\left\| U^\top S^\top S U - I_d \right\| > \epsilon\right\} \leq \delta.
	\]
\end{theorem}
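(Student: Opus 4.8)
The plan is to follow the standard two-stage analysis of Hadamard-based embeddings: first show that the orthogonal transform $HD$ \emph{flattens} the column space of $U$ so that all its rows carry comparable mass (the \emph{incoherence} step), and then show that uniform row sampling of a flat matrix preserves its Gram matrix up to $\epsilon$ (the \emph{sampling} step). Concretely, using $B^\top B = B$ and writing $V := HDPU$, which still has orthonormal columns since $HD$ and $P$ are orthogonal, I would reduce the target quantity to
\begin{equation}
U^\top S^\top S U = \frac{N}{m}\, V^\top B V = \frac{N}{m}\sum_{i=1}^{N} b_i\, v_i v_i^\top, \nonumber
\end{equation}
where $v_i \in \mathbb{R}^d$ denotes the $i$-th row of $V$ and $b_i \sim \mathrm{Bernoulli}(m/N)$ are independent. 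Since $\mathbb{E}\big[\tfrac{N}{m}\sum_i b_i v_i v_i^\top\big] = V^\top V = I_d$, the theorem becomes a concentration statement for a sum of independent, mean-$I_d$, positive semidefinite matrices.

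For the flattening step, I would fix a row index $i$ and observe that $\|v_i\|^2 = \tfrac1N\, \varepsilon^\top U U^\top \varepsilon$ for a Rademacher vector $\varepsilon$ (absorbing the deterministic Hadamard signs of row $i$ into the random signs of $D$), so that $\mathbb{E}\|v_i\|^2 = d/N$. Applying the Hanson--Wright inequality to this quadratic form, with $\|\tfrac1N UU^\top\| = 1/N$ and $\|\tfrac1N UU^\top\|_F^2 = d/N^2$, and then taking a union bound over all $N$ rows, I would obtain the coherence bound
\begin{equation}
\mu := \max_{1\le i\le N} \|v_i\|^2 \;\lesssim\; \frac{d + \log(N/\delta)}{N} \nonumber
\end{equation}
on an event of probability at least $1-\delta/2$. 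The $d$-term comes from the mean, while the $\log(N/\delta)$-term is the price of the union bound over rows.

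Conditioning on this event, each summand $X_i := \tfrac{N}{m} b_i v_i v_i^\top$ is positive semidefinite with $\|X_i\| \le \tfrac{N}{m}\mu =: L \lesssim \tfrac{d+\log(N/\delta)}{m}$, and $\sum_i \mathbb{E} X_i = I_d$. The matrix Chernoff inequality then yields
\begin{equation}
\Pr\Big\{\big\| \textstyle\sum_i X_i - I_d \big\| > \epsilon\Big\} \;\le\; 2d\,\exp\!\big(-c'\epsilon^2/L\big), \nonumber
\end{equation}
for a constant $c'>0$ and $\epsilon\in(0,1)$. Forcing this to be at most $\delta/2$ requires $L \lesssim \epsilon^2/\log(d/\delta)$, i.e. $m \gtrsim \epsilon^{-2}(d+\log(N/\delta))\log(ed/\delta)$, which is exactly the stated lower bound; a final union bound over the two failure events closes the argument.

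I expect the flattening step to be the main obstacle. The sampling step is essentially a black-box application of matrix Chernoff once incoherence is in hand, but squeezing the sharp $d+\log(N/\delta)$ coherence bound out of the Hadamard structure is delicate. One must verify that absorbing the fixed Hadamard signs into $D$ genuinely preserves the Rademacher distribution (so that Hanson--Wright applies row by row with the correct spectral and Frobenius constants), and one must balance the two regimes of the Hanson--Wright tail against the $N$-fold union bound so that neither the subexponential nor the subgaussian branch degrades the exponent.
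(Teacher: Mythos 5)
This theorem is not proved in the paper at all: it is imported verbatim from the cited reference \citep{wang2022iterative}, so there is no in-paper proof to compare against. Judged on its own merits, your argument is correct in outline and is exactly the standard route for SRHT embeddings (and the one the cited source follows): reduce $U^\top S^\top S U$ to $\tfrac{N}{m}\sum_i b_i v_i v_i^\top$ with $V = HDPU$ column-orthonormal, establish incoherence $\max_i \|v_i\|^2 \lesssim (d+\log(N/\delta))/N$ by applying Hanson--Wright row by row (the fixed Hadamard signs of each row absorbed into $D$ do yield a genuine Rademacher vector, and the dependence of the rows on the common $D$ is harmless because the union bound needs no independence), and then invoke matrix Chernoff conditionally on the flatness event, which is legitimate since $B$ is independent of $(D,P)$. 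The two regimes of the Hanson--Wright tail combine into the $d+\log(N/\delta)$ coherence bound via $\sqrt{d\log(N/\delta)} \le \tfrac12\bigl(d+\log(N/\delta)\bigr)$, and the resulting sketch-size requirement matches the stated $m \gtrsim \epsilon^{-2}\bigl(d+\log(N/\delta)\bigr)\log(ed/\delta)$, so no genuine gap remains.
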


For the selection of the LS solver in the two stages of SLSE-FRS, we prefer to adopt the M-IHS algorithm due to its numerous beneficial features \citep{ozaslan2019iterative}. Moreover, the Hessian sketch applied in M-IHS is defined as \(\hat{H}\triangleq(\hat{S}X)^{T}\hat{S}X\), where \(\hat{S} \in \mathbb{R}^{r \times N}\) is a fixed sketching matrix.

In SLSE-FRS, the M-IHS iteration in the 1st stage can be expressed as
\begin{equation}\label{98}
	\beta_{t + 1} = \beta_t - \mu \hat{H}^{-1}(S_{i}X)^{\top}(S_{i}X\beta_t - S_{i}Y) + \eta ({\beta _t} - {\beta _{t - 1}}),
\end{equation}
where \(S_i\) is the \(i\)-th sketching matrix in the \(i\)-th sketched LS subproblem.
The M-IHS iteration in the 2nd stage can be expressed as
\begin{equation}\label{99}
	\beta_{t + 1} = \beta_t - \mu \hat{H}^{-1} X^{\top}(X\beta_t - Y) + \eta ({\beta _t} - {\beta _{t - 1}}).
\end{equation}

In fact, the formulae (\ref{98}) and (\ref{99}) can be regarded as using a number of preconditioned Richardson extrapolation iterations, namely $a_i$ iterations for (\ref{98}) and $T-T^{\dagger}$ iterations for (\ref{99}), with the preconditioner \(\hat{H}\) to solve the normal equations of the sketched LS subproblems and the full-scale original LS problem, namely
\begin{equation}\label{linearsystem}
	(S_{i}X)^{\top}S_{i}X\beta = (S_{i}X)^{\top}S_{i}Y
\end{equation}
and
\begin{equation}\label{fulllinearsystem}
	X^{\top}X\beta = X^{\top}Y.
\end{equation}

The floating-point operation (FLOPs) cost of the formulae (\ref{98}) and (\ref{99}) can be obtained based on the results in \citep{golub2013matrix}. Given \( \beta_t\), \(\hat{H}\), and the sketched data \((S_{i}X, S_{i}Y), i = 1, \ldots, K\), one sketched M-IHS iteration using formula (\ref{98}) in the 1st stage requires \(\{(4d+1)m_i+2d^2+5d\}\) FLOPs, while one full-scale M-IHS iteration with the data \((X, Y)\) requires \(\{(4d+1)N+2d^2+5d\}\) FLOPs. In one iteration, the first term of the cost is reduced from \(\{(4d+1)N\}\) to \(\{(4d+1)m_i\}\). It is noteworthy that the number of full-scale M-IHS iterations required in the 2nd stage is significantly less than the number of full-scale M-IHS iterations used throughout the entire iteration process.

Based on the embedding property of SRHT, the SRHT sketching matrix \(\hat{S} \in \mathbb{R}^{r \times N}\) adopted in the Hessian sketch \(\hat{H}\triangleq(\hat{S}X)^{T}\hat{S}X\) should be of sketch size \(r=O(d\log d)\). We follow the idea in \citep{wang2022iterative} to determine the sketch size \({m_i}\), \(i = 1,\ldots, K\), of the sketched LS subproblem (\ref{sketchedproblem}), which grows by a factor of \(2=m_{i+1}/m_i\), \(i = 1,\ldots, K-1\). In this way, we construct \(K = \log_2(N/m_1)\) sketched LS subproblems. According to the aforementioned iterations for (\ref{linearsystem}) and (\ref{fulllinearsystem}), the Hessian sketch \(\hat{H}\) serves as a randomized preconditioner (or approximation) of both \((S_{i}X)^{\top}S_{i}X\) and \(X^{\top}X\). Therefore, the sketch size of the smallest sketched LS subproblem should be larger than the sketch size of \(\hat{H}\), i.e., \(m_1 > r\).

Now, we consider the details of the efficient construction of the sketching matrix \(S_{i} \in \mathbb{R}^{{m}_{i} \times N}\), \(i = 1, \ldots, K\). Firstly, we left-multiply the original data \((X, Y)\) by \(HDP\) to get the full-scale sketched data \((S_0X, S_0Y):= (HDPX, HDPY)\). Secondly, we can easily construct the sketched data sequence \((S_iX, S_iY)\), \(i = 1,\ldots, K\), by sampling the rows of the data \((S_0X, S_0Y)\), namely by left-multiplying the random matrix \(B_i \in \mathbb{R}^{m_i \times N}\), which is consisted of the \(m_i\) non-zero rows of \(B\). The \(i\)-th sketched data can be represented as
\begin{equation*}
	(S_iX,S_iY) = \sqrt{\frac{N}{m_{i}}}B_i(S_0X,S_0Y).
\end{equation*}
The bottleneck of constructing the \(K\) sketched LS subproblems is to apply the Hadamard transform to a matrix of size \(N\times d\), which costs \(Nd\log_2 N\) FLOPS. Hence, the overall sketching time complexity is \(Nd\log_2 N\).

In summary, the above discussion leads to an efficient implementation of SLSE-FRS, as described in Algorithm \ref{Alg-SRHT}.

\begin{algorithm}[bt]
	\caption{An Efficient SLSE-FRS}
	\label{Alg-SRHT}
	\begin{algorithmic}
		\STATE {\bfseries Input:} $X \in \mathbb{R}^{N \times d}, Y \in \mathbb{R}^{N}, \hat{H}, \mu, \eta, (S_0X,S_0Y)$,
		\STATE $K, {\{B_i\}}_{i = 1}^{K}, {\{a_i\}}_{i = 1}^{K}, {\{m_i\}}_{i = 1}^{K}, T, T^{\dagger}\leftarrow\sum\limits_{i=1}\limits^{K} a_i, t \leftarrow 0$
        \STATE \#\#\# {\textsc{1st stage}} \#\#\#
		\FOR{$i\leftarrow1$ {\bfseries to} $K$}
		\STATE $(S_iX,S_iY)\leftarrow \sqrt{\frac{N}{m_{i}}}B_i(S_0X,S_0Y)$
		\FOR{$j\leftarrow1$ {\bfseries to} $a_i$}
		\STATE $\beta_{t+1} \leftarrow \beta_t - \mu \hat{H}^{-1} \nabla f(\beta_t; S_i X, S_i Y) + \eta (\beta_t - \beta_{t-1})$
		\STATE $t\leftarrow t+1$
		\ENDFOR
		\ENDFOR
        \STATE \#\#\# {\textsc{2nd stage}} \#\#\#
		\FOR{$t\leftarrow$ $T^{\dagger}$ {\bfseries to} $T$}
		\STATE $\beta_{t+1} \leftarrow \beta_t - \mu \hat{H}^{-1} \nabla f(\beta_t; X, Y) + \eta (\beta_t - \beta_{t-1})$
		\ENDFOR
		\STATE Return $\beta_{T}$
	\end{algorithmic}
\end{algorithm}

\section{The convergence of SLSE-FRS}\label{sec:theorem}

This section offers the theoretical assurance of SLSE-FRS.
Let ${\beta}_{j}^{i}$ represent the $j$-th iterate of the $i$-th sketched LS subproblem. The \(K\) iterates \({{\beta}_{a_i}^{i}}\), \(i = 1,\ldots, K\), of the sketched LS subproblems can serve as a sequence of estimators for the true parameter vector \(\beta\) with increasing precision \(\delta_{i}\). For any \(\beta_0 \in \mathbb{R}^{d}\), we set \(\beta_{0}^1 = \beta_0\) and \(\beta_{0}^{i} = {\beta_{a_{i-1}}^{i-1}}, i = 2,\ldots, K\). The following theorem provides the convergence behavior of the iterations for the \(i\)-th sketched LS subproblem.

% 收敛定理
\begin{theorem}%\label{thm:convergence_theorem1}
	\label{thm:convergence_theorem1}
	In the \(i\)-th sketched LS subproblem of SLSE-FRS in Algorithm \ref{Alg-SRHT}, let \(N, m_1\) be powers of 2, \(\delta \in (0, 1)\), \(\epsilon \in (0, 1/10)\), \(|\mu-1| \leq 1/4\), and \(\eta = 53/36 - \sqrt{17}/3\). Let
	\[
	\begin{aligned}
		& r \geq c\epsilon^{-2}\left[d + \log\left(\frac{N}{\delta}\right)\right]\log\left(\frac{ed}{\delta}\right), \quad m_1 \textgreater r,\\
	\end{aligned}
	\]
	where \( c > 0 \) is a constant. There exists a constant \(M_i \geq 1\), such that if \(a_i > M_i\), with probability at least \(1-2\delta\), we have
	\begin{align*}
		\mathbb{E} \| X(\beta_{a_i}^{i} - {\beta}) \|
		\leq
		\left(\frac{1}{3}\right)^{a_i}
		&\mathbb{E}\|
		X(\beta_{0}^{i} - \beta)
		\|
	    +
		\left[
		1+\left(\frac{1}{3}\right)^{a_i}
		\right]\delta_i.
	\end{align*}
\end{theorem}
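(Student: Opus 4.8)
The plan is to separate the prediction error into an \emph{optimization} part and a \emph{statistical} part by inserting the exact sketched LS estimator $\tilde\beta^i$ of the $i$-th subproblem. By the triangle inequality and the definition of $\delta_i$,
\[
\mathbb{E}\|X(\beta_{a_i}^i-\beta)\| \le \mathbb{E}\|X(\beta_{a_i}^i-\tilde\beta^i)\| + \mathbb{E}\|X(\tilde\beta^i-\beta)\| = \mathbb{E}\|X(\beta_{a_i}^i-\tilde\beta^i)\| + \delta_i ,
\]
so the whole statement reduces to showing the optimization error contracts as $\mathbb{E}\|X(\beta_{a_i}^i-\tilde\beta^i)\|\le(1/3)^{a_i}\,\mathbb{E}\|X(\beta_0^i-\tilde\beta^i)\|$, after which a second triangle inequality $\mathbb{E}\|X(\beta_0^i-\tilde\beta^i)\|\le\mathbb{E}\|X(\beta_0^i-\beta)\|+\delta_i$ and the bookkeeping $(1/3)^{a_i}\delta_i+\delta_i=[1+(1/3)^{a_i}]\delta_i$ reproduce the claimed inequality exactly. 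Everything therefore rests on a per-step contraction estimate for the M-IHS recursion (\ref{98}) toward $\tilde\beta^i$.

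The heart of the argument is the spectral analysis of (\ref{98}). Writing $e_t=\beta_t-\tilde\beta^i$ and $A_i=S_iX$, and using $A_i^\top A_i\tilde\beta^i=A_i^\top S_iY$, the gradient telescopes to $A_i^\top A_i e_t$, so the error follows the heavy-ball recursion $e_{t+1}=[(1+\eta)I-\mu\hat H^{-1}A_i^\top A_i]e_t-\eta e_{t-1}$. I would precondition by setting $\tilde e_t=\hat H^{1/2}e_t$, which turns this into $\tilde e_{t+1}=[(1+\eta)I-\mu G_i]\tilde e_t-\eta\tilde e_{t-1}$ with the symmetric positive definite matrix $G_i=\hat H^{-1/2}A_i^\top A_i\hat H^{-1/2}$. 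Introducing the thin SVD $X=U\Sigma V^\top$, the eigenvalues of $G_i$ coincide with those of $\Phi^{-1}\Psi_i$, where $\Phi=U^\top\hat S^\top\hat S U$ and $\Psi_i=U^\top S_i^\top S_iU$. On the intersection of the two events from Theorem \ref{thm:bigtheorem}, namely that $\hat S$ and $S_i$ are each $(1+\epsilon)$ subspace embeddings for $X$ (holding with probability at least $1-2\delta$ by a union bound, and valid for $S_i$ since $m_i\ge m_1>r$), one has $\|\Phi-I\|\le\epsilon$ and $\|\Psi_i-I\|\le\epsilon$, whence the spectrum of $G_i$ lies in $[\tfrac{1-\epsilon}{1+\epsilon},\tfrac{1+\epsilon}{1-\epsilon}]$. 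Diagonalizing $G_i$ decouples the recursion into scalar heavy-ball recursions $z_{t+1}=(1+\eta-\mu\lambda)z_t-\eta z_{t-1}$ whose roots solve $x^2-(1+\eta-\mu\lambda)x+\eta=0$.

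I would then verify that for every eigenvalue $\lambda\in[\tfrac{1-\epsilon}{1+\epsilon},\tfrac{1+\epsilon}{1-\epsilon}]$ and every $\mu$ with $|\mu-1|\le1/4$, under $\epsilon<1/10$, the product $\mu\lambda$ lies strictly inside $((1-\sqrt\eta)^2,(1+\sqrt\eta)^2)$; for $\eta=53/36-\sqrt{17}/3$ this is a short arithmetic check (the worst case $\mu\lambda\in[\tfrac{27}{44},\tfrac{55}{36}]$ sits well inside the interval induced by $\sqrt\eta\approx0.313$). This forces the discriminant negative, so both roots are complex with modulus exactly $\sqrt\eta<1/3$. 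Because $\mu\lambda$ stays bounded away from the real-root boundary, each $2\times2$ companion block $C_j$ is diagonalizable with eigenvector conditioning bounded by a universal constant, giving $\|C_j^{\,t}\|\le C(\sqrt\eta)^t$ uniformly. Re-expressing in the prediction seminorm via the equivalence $(1-\epsilon)X^\top X\preceq\hat H\preceq(1+\epsilon)X^\top X$ (again from $\|\Phi-I\|\le\epsilon$) converts the $\hat H$-norm decay into $\|X(\beta_{a_i}^i-\tilde\beta^i)\|\le C'(\sqrt\eta)^{a_i}\|X(\beta_0^i-\tilde\beta^i)\|$ pathwise in $\zeta$. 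Since $\sqrt\eta<1/3$ strictly, there is a threshold $M_i$ (depending on $C'$ and $\epsilon$) with $C'(\sqrt\eta)^{a_i}\le(1/3)^{a_i}$ whenever $a_i>M_i$; taking $\mathbb{E}_\zeta$ and combining with the two triangle inequalities gives the result.

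I expect the main obstacle to be the uniform spectral-radius bound for the momentum iteration: I must guarantee the discriminant remains negative across the \emph{entire} range of $\mu\lambda$ produced jointly by the step-size tolerance $|\mu-1|\le1/4$ and the embedding-controlled spectrum, and that the $2\times2$ eigenvector conditioning does not degenerate near the complex/real threshold. This is precisely what the engineered constants $\epsilon<1/10$ and $\eta=53/36-\sqrt{17}/3$ secure, and it is where the constant prefactor $C'$, and hence the threshold $M_i$, originates. A minor secondary point is the two-term initial data of the recursion: since $\beta_0^i$ is inherited from the previous subproblem, both $\tilde e_0$ and $\tilde e_{-1}$ must be controlled, but each is bounded by $\|X(\beta_0^i-\tilde\beta^i)\|$ up to the norm-equivalence constants, so this only enlarges $C'$ and is absorbed into $M_i$.
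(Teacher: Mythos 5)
Your proposal is correct and follows essentially the same route as the paper's proof: the same two triangle inequalities around $\tilde\beta^i$, and the same heavy-ball spectral analysis of the M-IHS error recursion via $2\times2$ companion blocks whose eigenvalues are complex of modulus $\sqrt\eta<1/3$ under the embedding events and the stated parameter ranges. The only cosmetic differences are that you symmetrize with $\hat H^{1/2}$ and control the non-normal transient by an explicit eigenvector-conditioning constant, whereas the paper works in the prediction coordinates $D_X V_X^\top$ and absorbs the transient via Gelfand's formula $\lim_{a_i\to\infty}\|[L^{(i)}]^{a_i}\|^{1/a_i}=\rho(L^{(i)})$; both devices yield the same threshold $M_i$.
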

According to Theorem \ref{thm:convergence_theorem1}, the prediction error decays exponentially. For sufficiently large \(a_i\), the estimator \(\beta_{a_i}^{i}\) of the \(i\)-th sketched LS subproblem can achieve the precision \(\delta_i\). Based on Theorem \ref{thm:convergence_theorem1}, we provide the convergence assurance of SLSE-FRS in the following theorem.

%% SLSE-FRS+M-IHS 最终期望误差
%\begin{theorem}\label{Thf}
%	In Algorithm \ref{Alg-SRHT}, let \(N,m_1\) be powers of 2, \(T^\dagger = \sum_{i=1}^{K} a_i\), \(\delta \in (0, 1)\), \(\epsilon \in (0, 1/10)\), \(|\mu-1| \leq 1/4\), and \(\eta = 53/36 - \sqrt{17}/3\). Let
%	\[
%	\begin{aligned}
%		& r \geq c\epsilon^{-2}\left[d + \log\left(\frac{N}{\delta}\right)\right]\log\left(\frac{ed}{\delta}\right), \quad m_1 \textgreater r,\\
%	\end{aligned}
%	\]
%	where \( c > 0 \) is a constant. There exists a constant \({M} >0\), such that if \(a_i > {M}\), \(i = 1,\ldots, K\), and $T > T^\dagger + M$, with probability at least \(1-2\delta\), for any \(\beta_0 \in \mathbb{R}^{d}\), it holds that
%	\begin{equation*}\label{final_precision}
%		\mathbb{E}\|X(\beta_T- {\beta})\|
%		<
%		\left(\frac{1}{3}\right)^{T}
%		\mathbb{E}\|X(\beta_{0}-\beta )\|
%		+
%		\left( 1 + \left(\frac{1}{3}\right)^{T - T^{\dagger}} \left[\frac{13}{9}\sqrt{\frac{N-d}{m_1-d}}\right] \right)
%		\mathbb{E}\|X(\hat{\beta}-\beta)\|.
%	\end{equation*}
%\end{theorem}

\begin{theorem}\label{Thf}
	In Algorithm \ref{Alg-SRHT}, let \(N,m_1\) be powers of 2, \(T^\dagger=\sum_{i=1}^K a_i\),  \(\delta \in (0, 1)\), \(\epsilon \in (0, 1/10)\), \(|\mu-1| \leq 1/4\), and \(\eta = 53/36 - \sqrt{17}/3\). Let
	\[
	\begin{aligned}
		& r \geq c\epsilon^{-2}\left[d + \log\left(\frac{N}{\delta}\right)\right]\log\left(\frac{ed}{\delta}\right), \quad m_1 \textgreater r,\\
	\end{aligned}
	\]
	where \( c > 0 \) is a constant. Let \(s=T-T^\dagger\), and define
	\[
	\mathcal{T}_i:=\sum_{\ell=1}^i a_\ell,
	\qquad i=1,\ldots,K,
	\]
	with \(\mathcal{T}_0=0\) and \(\mathcal{T}_K=T^\dagger\). Define \(\delta_i:=\mathbb{E}\|X(\tilde{\beta}_i-\beta)\|,\) \(\delta_\star:=\mathbb{E}\|X(\hat{\beta}-\beta)\|.\)
	Then there exists a constant \({M} \geq 1\), such that if \(a_i > {M}\) for all \(i = 1,\ldots, K\) and $s > M$, with probability at least \(1-2\delta\), for any \(\beta_0\in\mathbb{R}^d\), the iterate \(\beta_T\) satisfies
	\begin{equation}\label{basic_convergence}
		\mathbb{E}\|X(\beta_T-\beta)\|
		\le  \left(\frac13\right)^T\mathbb{E}\|X(\beta_0-\beta)\|+\sum_{i=1}^{K}\left[1+\left(\frac13\right)^{a_i}\right]\left(\frac13\right)^{T-\mathcal{T}_i}\delta_i
		+\left[1+\left(\frac13\right)^s\right]\delta_\star .
	\end{equation}
	Furthermore, let \(C>0\) be a constant such that \(\delta_i\le C\delta_\star, i=1,\ldots,K\), then
	\begin{equation}\label{final_precision}
		\mathbb{E}\|X(\beta_T-\beta)\|
		\le
		\left(\frac13\right)^T
		\mathbb{E}\|X(\beta_0-\beta)\|
		+
		\left[
		1+
		\left(\frac13\right)^s(1+2C)
		\right]\delta_\star .
	\end{equation}
\end{theorem}

Under the parameter settings specified in Theorems \ref{thm:convergence_theorem1} and \ref{Thf}, the convergence rate of SLSE-FRS is bounded above by $1/3$. We note that, according to the idea of the Proof of Theorem \ref{thm:convergence_theorem1} in Appendix \ref{B}, there is a possibility to obtain an even sharper upper bound for the convergence rate of SLSE-FRS by carefully selecting alternative parameter settings.

In addition, the relative error condition \(\delta_i\le C\delta_\star\) is only used to derive the simplified OLS-level precision estimate from the basic convergence bound \eqref{basic_convergence}. This condition is natural in the present sequential framework, since the sketched LS subproblems are constructed with increasing sketch sizes and are expected to provide progressively refined estimators. For a fixed finite sequence of sketched subproblems, when \(\delta_\star>0\), one may take
\begin{equation*}
	C=\max_{1\le i\le K}\frac{\delta_i}{\delta_\star}.
\end{equation*}
Notably, the second term on the right-hand side of \((\ref{final_precision})\) is governed by the factor \(1+\left(1/3\right)^s(1+2C)\). For fixed \(C\), this factor tends to \(1\) as the number \(s=T-T^\dagger\) of full-scale refinement steps increases. Therefore, as \(T\) increases with \(s=T-T^\dagger\to\infty\), the first term converges to zero, while the second term converges to \(\delta_\star\). Consequently, the final error is essentially at the noise level, which means that SLSE-FRS reaches the OLS-level prediction precision.

In practical implementations, the determination of the iteration count \(a_i\) in the \(i\)-th sketched LS subproblem is essential. Here, we provide an estimation of \(a_i\) required by the \(i\)-th sketched LS subproblem to achieve the precision \(\delta_i\). In the \(i\)-th sketched LS subproblem, we expect that SLSE-FRS returns an iterate \(\beta_{a_i}^{i}\) satisfying the condition (\ref{11}). With a prescribed tolerance \(\omega \in (0,1)\), the count \(a_i\) can be determined by requiring
\begin{align}\label{14}
	\mathbb{E}\|X({\beta}_{a_i}^i - {\beta})\|
	& \leq
	\mathbb{E}\|X({\beta}_{a_i}^i - \tilde{\beta}^i)\| + \mathbb{E}\|X({\tilde{\beta}^i} - \beta)\| \nonumber \\
	& \leq (1+\omega)\mathbb{E}\|X({\tilde{\beta}^i} - \beta)\|.
\end{align}
The following theorem provides an explicit lower bound of \({a_i}\), \(i = 2,\ldots, K\). Related discussion and complete proof details are presented in Appendix~\ref{appendixa5}. The lower bound of \(a_1\) is determined by Theorem \ref{a1} in Appendix \ref{appendixa1}.

% 第i个问题的收敛步数
\begin{theorem}\label{lowerBoundai}%\label{lowerBoundai}
	In Algorithm \ref{Alg-SRHT}, for a prescribed tolerance \(\omega \in (0,1)\), the iteration count \(a_i\) needed for the \(i\)-th sketched LS subproblem to fulfill (\ref{14}) satisfies
	\[
	a_i
	\geq
	\log_3 \left[ \frac{(1+\omega)r(i-1, i) + 1}{\omega} \right]
%/	\log(\frac{1}{3})
	> 0,
	\]
	where \( r(i-1,i) = \sqrt{(m_{i} - d) / (m_{i-1} - d)} \) .
\end{theorem}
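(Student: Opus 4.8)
The plan is to convert the target condition (\ref{14}) into an explicit requirement on the inner-iteration count $a_i$ via the linear contraction of M-IHS toward the exact sketched minimizer $\tilde\beta^i$ of subproblem (\ref{sketchedproblem}). Writing $\delta_i = \mathbb{E}\|X(\tilde\beta^i - \beta)\|$, I would first note that the second inequality in (\ref{14}) is guaranteed as soon as the upper bound appearing in its first line, namely $\mathbb{E}\|X(\beta_{a_i}^i - \tilde\beta^i)\| + \delta_i$, is at most $(1+\omega)\delta_i$; equivalently, once $\mathbb{E}\|X(\beta_{a_i}^i - \tilde\beta^i)\| \le \omega\,\delta_i$. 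Hence it suffices to drive the prediction-norm residual of the $i$-th inner loop, measured relative to $\tilde\beta^i$, below the threshold $\omega\,\delta_i$.

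Next I would use the homogeneous linear contraction that underlies Theorem \ref{thm:convergence_theorem1}. The M-IHS recursion (\ref{98}) for the $i$-th subproblem is a momentum-accelerated preconditioned Richardson iteration for the normal equations (\ref{linearsystem}), whose unique fixed point is exactly $\tilde\beta^i$; under the stated choices ($|\mu-1|\le 1/4$, $\eta = 53/36 - \sqrt{17}/3$, and $r$ large enough for the SRHT embedding) its error operator has spectral radius at most $1/3$ in the $X$-norm, so that $\mathbb{E}\|X(\beta_{a_i}^i - \tilde\beta^i)\| \le (1/3)^{a_i}\,\mathbb{E}\|X(\beta_0^i - \tilde\beta^i)\|$. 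This is precisely the estimate from which the statement of Theorem \ref{thm:convergence_theorem1} is recovered after two triangle inequalities through $\beta$. It then remains to bound the warm-start error. Because $\beta_0^i = \beta_{a_{i-1}}^{i-1}$ and, by the induction hypothesis that subproblem $i-1$ already satisfies (\ref{14}), $\mathbb{E}\|X(\beta_{a_{i-1}}^{i-1} - \beta)\| \le (1+\omega)\delta_{i-1}$, the triangle inequality through $\beta$ gives $\mathbb{E}\|X(\beta_0^i - \tilde\beta^i)\| \le (1+\omega)\delta_{i-1} + \delta_i$.

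Combining the two bounds, the sufficient condition $(1/3)^{a_i}\big[(1+\omega)\delta_{i-1} + \delta_i\big] \le \omega\,\delta_i$ rearranges to $3^{a_i} \ge \omega^{-1}\big[(1+\omega)(\delta_{i-1}/\delta_i) + 1\big]$, and applying $\log_3$ yields the claimed bound once the ratio of exact sketched prediction errors is identified with $r(i-1,i)$, namely $\delta_{i-1}/\delta_i = r(i-1,i)$. The positivity statement $a_i > 0$ is then automatic: since $\omega \in (0,1)$ and $r(i-1,i) > 0$, the argument of $\log_3$ exceeds $\omega^{-1} > 1$.

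The step I expect to be the main obstacle is the identification $\delta_{i-1}/\delta_i = r(i-1,i)$. This is where the asymptotic analysis of \citep{dobriban2019asymptotics} enters: one must show that the exact sketched prediction error $\delta_i$ depends on the sketch size only through $m_i$ (and $d$), with the shared dependence on $N$, $\sigma$ and $X$ cancelling in the ratio, so that $\delta_{i-1}/\delta_i$ collapses to the stated function $r(i-1,i)$ of $m_{i-1}, m_i, d$. A secondary technical point is justifying that the rate-$1/3$ contraction of Theorem \ref{thm:convergence_theorem1}, stated relative to $\beta$, may be used in its homogeneous form relative to $\tilde\beta^i$; this follows from the linearity of the iteration (\ref{98}) about its fixed point, but should be stated explicitly.
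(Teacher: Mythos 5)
Your proposal follows essentially the same route as the paper's proof: both reduce (\ref{14}) to $\mathbb{E}\|X(\beta_{a_i}^i - \tilde{\beta}^i)\| \le \omega\,\mathbb{E}\|X(\tilde{\beta}^i - \beta)\|$, invoke the $(1/3)^{a_i}$ contraction toward $\tilde{\beta}^i$ established inside the proof of Theorem \ref{thm:convergence_theorem1}, bound the warm-start error by $(1+\omega)\delta_{i-1} + \delta_i$ (the paper routes the triangle inequality through $\tilde{\beta}^{i-1}$ and $\beta$ using the stopping criterion of subproblem $i-1$, which is equivalent to your induction hypothesis that (\ref{14}) holds at step $i-1$), and control the ratio $\delta_{i-1}/\delta_i$ via the asymptotic prediction-efficiency result of \citet{dobriban2019asymptotics} (Lemma \ref{lemma:lemma3} and the remark following it), which is exactly the step you flagged as the main obstacle. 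One small point: the paper derives only the one-sided bound $\delta_{i-1}/\delta_i \le r(i-1,i)$ with $r(i-1,i) = \sqrt{(m_i - d)/(m_{i-1} - d)}$ (which suffices, and agrees with your orientation of the ratio), whereas the formula for $r(i-1,i)$ printed in the theorem statement appears to have the fraction inverted.
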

Although the iteration count, e.g., $a_i$ and $T$, should be a positive integer, for convenience of explanation, we will directly use positive real numbers to represent the iteration count in the following discussion.
If we let the iteration count \(a_i\) equals to the above lower bounds, then
Theorem \ref{lowerBoundai} implies that the sequence $\{a_i\}_{i=2}^K$ are monotonically increasing with respect to $i$, namely \(a_2<\ldots<a_K\) (refer to Appendix \ref{cmpxtyAlgSLSEPrf}), which can serve as a guidance for determining \(a_i\) in Algorithm \ref{Alg-SRHT}.

To end this section, we summarize the computational complexity of Algorithm \ref{Alg-SRHT} in the asymptotic sense in the theorem as follows. Here, the complexity is only considered for its main part, i.e., the dominant portion.

\begin{theorem} \label{cmpxtyAlgSLSE}
  Under the same conditions as Theorem \ref{lowerBoundai}, let Algorithm \ref{Alg-SRHT} terminate when it finds an estimator that achieves the noise level \(\sigma\). Let the positive integer $K=O(1)$, and the sketch sizes satisfy \(m_{i+1}/m_i=2\), $i=1,\ldots,K-1$, and $m_K=N/2$. For a specified $\omega\in(0,1)$, let $a_i$ ($i=1,\ldots,K$) take exactly their lower bounds. In the sense of asymptotic meaning, that is, let $d/N\rightarrow\gamma\in(0,2^{-K})$ as $N\rightarrow+\infty$, it holds that $\log_3(1/\omega)<a_i<\alpha$ ($i=1,\ldots,K$) with $\alpha=\log_3\{[1+(1+\omega)/\sqrt{2}]/\omega\}$ if $1 < \mathbb{E}\|X({\beta}_0 - \tilde{\beta}_1)\| / \mathbb{E}\|X(\tilde{\beta}_1 - \beta)\| < 1+(1+\omega)/\sqrt{2}$. Hence, the costs (dominant portion only) of all stages of Algorithm \ref{Alg-SRHT} are listed below:
  \begin{itemize}
    \item The initialization stage: $Nd\log_2N$;
    \item The 1st stage: $4\alpha Nd$;
    \item The 2nd stage: $4[\log_3(\omega^{K}/\sigma)]Nd$ with $\omega>\sigma^{1/K}$.
  \end{itemize}
\end{theorem}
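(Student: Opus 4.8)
The plan is to first pin down the bounds \(\log_3(1/\omega)<a_i<\alpha\), since everything else follows from them by direct FLOP accounting. I would substitute the prescribed sketch sizes into the lower bound of Theorem~\ref{lowerBoundai}. Because \(m_{i+1}/m_i=2\) and \(m_K=N/2\), one has \(m_i=N\,2^{-(K-i+1)}\), so that
\[
r(i-1,i)^2=\frac{m_{i-1}-d}{m_i-d}=\frac{2^{-(K-i+2)}-d/N}{2^{-(K-i+1)}-d/N}\xrightarrow[N\to\infty]{}\frac{\xi_i/2-\gamma}{\xi_i-\gamma},\qquad \xi_i:=2^{-(K-i+1)}.
\]
Since \(\gamma\in(0,2^{-K})\) we have \(\xi_i\ge\xi_1=2^{-K}>\gamma>0\), which makes the limit strictly positive, and because \(\gamma>0\) a one‑line cross‑multiplication shows it is strictly below \(1/2\); hence \(0<r(i-1,i)<1/\sqrt2\) asymptotically for every \(i\ge2\). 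Taking \(a_i\) equal to its lower bound and using monotonicity of \(\log_3\) then yields \(\log_3(1/\omega)<a_i<\log_3\{[1+(1+\omega)/\sqrt2]/\omega\}=\alpha\). For \(i=1\) I would invoke Theorem~\ref{a1}: its lower bound for \(a_1\), together with the assumed range \(1<\mathbb{E}\|X(\beta_0-\tilde\beta_1)\|/\mathbb{E}\|X(\tilde\beta_1-\beta)\|<1+(1+\omega)/\sqrt2\), places \(a_1\) in the same interval \((\log_3(1/\omega),\alpha)\).

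With the \(a_i\) controlled, the cost accounting is direct. The initialization cost is that of forming \((S_0X,S_0Y)=(HDPX,HDPY)\), i.e.\ one Hadamard transform on an \(N\times(d{+}1)\) array, which costs \(Nd\log_2N\) FLOPs as recorded in Section~\ref{sec:effcntSLSE}; the subsequent row‑samplings \(B_i(S_0X,S_0Y)\) touch only \(\sum_i m_i<N\) rows and cost \(O(Nd)\), a lower‑order term. For the 1st stage I would sum the per‑iteration count \((4d+1)m_i+2d^2+5d\) over the \(a_i\) inner iterations of each subproblem. Keeping only the dominant term \(4d\,m_i\) and using \(a_i<\alpha\),
\[
\sum_{i=1}^{K} a_i\bigl[(4d+1)m_i+2d^2+5d\bigr]\;\sim\;4d\sum_{i=1}^{K} a_i m_i\;<\;4d\,\alpha\sum_{i=1}^{K} m_i\;=\;4\alpha Nd\,(1-2^{-K})\;\le\;4\alpha Nd,
\]
since \(\sum_{i=1}^K m_i=N\sum_{j=1}^K 2^{-j}=N(1-2^{-K})\); the \(O(d^2)\) and \(O(N)\) remainders are negligible because \(K,\alpha=O(1)\).

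The 2nd‑stage count is the crux. Here I would feed the bound \(a_i>\log_3(1/\omega)\), equivalently \((1/3)^{a_i}<\omega\), into the one‑step recursion of Theorem~\ref{thm:convergence_theorem1}, \(\mathbb{E}\|X(\beta_{a_i}^i-\beta)\|\le(1/3)^{a_i}\mathbb{E}\|X(\beta_0^i-\beta)\|+[1+(1/3)^{a_i}]\delta_i\). Unrolling this over \(i=1,\dots,K\) while tracking the prediction error relative to the common noise floor, each subproblem contracts the excess error by a factor at most \(\omega\), so that at the end of the 1st stage the normalized excess error is at most \(\omega^K\). The 2nd stage then runs full‑scale M‑IHS, which contracts toward \(\hat\beta\) at rate \(1/3\) by the mechanism behind Theorem~\ref{Thf}; reducing the excess from level \(\omega^K\) down to the prescribed noise level \(\sigma\) therefore requires \(T-T^\dagger=\log_3(\omega^K/\sigma)\) iterations, which is positive precisely because \(\omega>\sigma^{1/K}\). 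Multiplying by the dominant per‑iteration cost \(4dN\) of a full‑scale step gives the stated \(4[\log_3(\omega^K/\sigma)]Nd\).

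The main obstacle is the 2nd‑stage bookkeeping: making the informal ``factor \(\omega\) per subproblem, hence \(\omega^K\) overall'' rigorous. The difficulty is that each subproblem contracts toward its own sketched solution \(\tilde\beta^i\) with an irreducible floor \(\delta_i\), whereas the 2nd stage contracts toward the OLS solution \(\hat\beta\) rather than toward \(\beta\). I must therefore separate the iterate error from these floors, show that the \(\delta_i\) and the OLS gap all sit at the common noise scale so that the terminal target can be identified with \(\sigma\), and verify that the residual \([1+(1/3)^{a_i}]\delta_i\) contributions do not spoil the clean \(\omega^K\) contraction. Once the normalized end‑of‑1st‑stage error is pinned at \(\omega^K\) and the noise level at \(\sigma\), the remaining ingredients — the geometric sum \(\sum_i m_i\), the per‑iteration FLOP counts, and the \(K,\alpha=O(1)\) simplifications — are routine.
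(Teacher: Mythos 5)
Your treatment of the $a_i$ bounds, the initialization cost, and the 1st-stage cost is correct and is essentially the paper's own argument: the same asymptotic evaluation of $r(i-1,i)$ (your cross-multiplication giving $0<r(i-1,i)<1/\sqrt{2}$ for $i\ge 2$ replaces the paper's monotonicity-in-$i$ argument, an immaterial difference), the same invocation of Theorem \ref{a1} together with the assumed range of $\mathbb{E}\|X(\beta_0-\tilde\beta^1)\|/\mathbb{E}\|X(\tilde\beta^1-\beta)\|$ for $i=1$, and the same accounting $\sum_{i=1}^{K}m_i=(1-2^{-K})N$ with $a_i<\alpha$ yielding $4\alpha Nd$.

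The gap is in the 2nd stage, which is the crux of the theorem (it produces the $\log_3(\omega^K/\sigma)$ count and the condition $\omega>\sigma^{1/K}$), and which you explicitly leave open: you reach the right formula only through an ``excess error is $\omega^K$ after stage 1'' claim whose rigorous justification (handling the floors $\delta_i$ and the distinct contraction targets $\tilde\beta^i$, $\hat\beta$, $\beta$) you defer as the ``main obstacle.'' The paper's proof shows that none of this bookkeeping is needed. Theorem \ref{Thf} already packages both stages into the single bound (\ref{final_precision}), in which $T$ is the \emph{total} iteration count; terminating at the noise level means the first term $(1/3)^{T}\mathbb{E}\|X(\beta_0-\beta)\|$ has decreased to $\sigma$-level, and since the initial error is a constant this forces $T=\log_3(1/\sigma)$. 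The 2nd-stage count then follows by pure subtraction of iteration counts,
\[
T-T^{\dagger}=\log_3(1/\sigma)-\sum_{i=1}^{K}a_i<\log_3(1/\sigma)-K\log_3(1/\omega)=\log_3(\omega^{K}/\sigma),
\]
using only $a_i>\log_3(1/\omega)$, and this is positive precisely when $\omega>\sigma^{1/K}$; multiplying by the dominant per-iteration cost $4Nd$ finishes the proof, with the per-subproblem floors never entering the argument. If you prefer your error-level route, note that the unrolling you describe is precisely Lemma \ref{finalerror} (the end-of-stage-1 error is $(1/3)^{T^{\dagger}}$ times the constant initial error plus a floor $[\sqrt{(N-d)/(m_K-d)}+o(1)]\,\mathbb{E}\|X(\hat\beta-\beta)\|$, a constant multiple of the noise level since $m_K=N/2$) combined with Lemma \ref{M-IHS converging rate}; so your plan is completable from results already proven in the paper, but as written your proposal does not close this step.
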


According to Theorem \ref{cmpxtyAlgSLSE}, the complexity of Algorithm \ref{Alg-SRHT} is dominated by the initialization stage, namely $Nd\log_2N$. On one hand, in scenarios where $\log_2N$ can be considered as $O(1)$, the total complexity of Algorithm \ref{Alg-SRHT} is $O(Nd)$. On the other hand, if the SRHT sketching matrix is replaced by the CountSketch sketching matrix, the complexity at the initialization stage of Algorithm \ref{Alg-SRHT} can be reduced to $O(Nd)$. If theorems (currently non-existent) similar to Theorems \ref{Thf}-\ref{cmpxtyAlgSLSE} can be proved, the theoretical complexity of Algorithm \ref{Alg-SRHT} with CountSketch becomes $O(Nd)$. The experiments in the next section show that Algorithm \ref{Alg-SRHT} with CountSketch is faster than Algorithm \ref{Alg-SRHT} with SRHT in the sense of computing time.

We need to emphasize that SLSE-FRS is the first algorithmic framework to integrate \textit{Sketch-and-Solve} and \textit{Iterative-Sketching}. Specifically, Algorithm \ref{Alg-SRHT} employs an integration of the SRHT-based \textit{Sketch-and-Solve} method with M-IHS, and the results of this section provide theoretical support for the aforementioned integration scheme. As a new algorithmic framework, SLSE-FRS can flexibly adapt to other potential integration schemes, and the theoretical framework provided in this section can offer possible paths for the theoretical analysis of any new algorithms within the SLSE-FRS framework.

\section{Numerical experiments} \label{sec:experiment}
In this section, we present some numerical experiments to show the effectiveness and efficiency of SLSE-FRS. We compare its performance with IDS in \citep{wang2022iterative}, PCG in \citep{lacotte2021faster}, and M-IHS in \citep{ozaslan2019iterative}. Our experiments follow standard configurations in prior works \citep{pilanci2016iterative,pilanci2017newton,wang2022iterative}. See environment details in Appendix \ref{details}.

We begin with generating linear models \(Y = X \beta + \zeta\). The feature data matrix $X$ is generated as follows: First, a Gaussian random matrix $G$ of size $N \times d$ is generated. Then, its columns are scaled, i.e., we set \(X = G D,\) where $D = \mathrm{diag}(\tilde{\sigma}_1, \ldots, \tilde{\sigma}_d)$ is a diagonal matrix with predetermined elements, thereby controlling the condition number of $X$. The components of the true parameter vector $\beta$ are independently sampled from the standard Gaussian distribution, and the components of the noise vector $\zeta$ are independently sampled from $\mathcal{N}(0,10^{-8})$. Finally, the response vector $Y$ is generated via the model (\ref{linear_model}). Based on these models, we construct LS test problems. The estimators of the true parameter vector \(\beta\) are computed by SLSE-FRS, IDS, PCG, and M-IHS.

In our experiments, we use small noise levels and large condition numbers to ensure that the algorithms require many iterations to reach the target accuracy, thereby clearly demonstrating  their performance under challenging scenarios. Under moderate noise levels or well-conditioned data matrices, all tested methods would converge within few iterations, making it difficult to meaningfully distinguish their convergence behavior and computational performance.

To evaluate the precision of each iterate \(\beta_t\) for \(t = 1, \ldots, T\), we consider the prediction error \(\Delta_t := \|X(\beta_t - \beta)\|^2\). According to \citep{pilanci2016iterative}, the iterates are expected to achieve the LS error, defined as \(\Delta:=\)\(\|X( \hat{\beta}- \beta)\|^{2}\), which is approximately equal to \(\sigma^2 d \).

We set \(T = 100\) to ensure all algorithms converge and achieve the OLS precision (as small as the LS error \(\Delta\)) within this limit. In the following experiments, we test different sample sizes $N$, feature sizes \(d\), condition numbers $\kappa$ of the feature matrix $X$, and the Hessian sketch size is set to \(r = 6d\) for all test models. IDS employs the iteration parameter \(\mu = \frac{(1 - d/r)^2}{1 + d/r}\) as suggested in \citep{wang2022iterative}. SLSE-FRS and M-IHS adopt the iteration parameters \(\mu = (1 - \eta)^2 \) and \(\eta = d/r\) following \citep{ozaslan2019iterative}. SLSE-FRS takes the expansion ratio $m_{i+1}/m_i=2$ with $m_1=8d$ and $m_K=N/2$ for sketch sizes.
%Theorems \ref{lowerBoundai}, \ref{cmpxtyAlgSLSE} and \ref{appendixa1} provide an lower bound on iteration count per subproblem.
%, which, under our settings, guarantees all \(a_i \geq 2\). Consequently, we test both 2 and 3 iterations.
PCG uses the SRHT sketching matrix to construct the preconditioner like \citep{lacotte2021faster}. The results of the subsequent report are the average of 10 independent runs.

\subsection{Determination of $a_i$}

In this subsection, we aim to determine feasible values of $a_i$ for all subsequent experiments at a sufficiently low cost to ensure the efficiency of SLSE-FRS.
%In this subsection, we aim to determine the feasible values of $a_i$ at the lowest cost for all subsequent experiments to ensure the efficiency of SLSE-FRS. The experiments are based on the following setting: $N=2^{20}$, $d=2^{6}$, and $\kappa = 10^{4}$.

According to Theorem \ref{lowerBoundai}, let $a_i$ be its lower bound. We test SLSE-FRS on an LS problem with \(N = 2^{20}\), \(d = 2^{6}\), and \(\kappa = 10^4\). When $\omega$ takes values from the set $\{2^{-4}$, $2^{-3}$, $2^{-2}$, $2^{-1}$, $1\}$, we obtain the curve on the left plot of Figure \ref{fig-determine-ai}, where the $\omega$-axis uses a logarithmic scale. Since $a_i$ is monotonically increasing with respect to $i=2,\ldots,K$, we only present the curves for $a_2$ and $a_K$, with other curves necessarily lying between them. The two curves almost coincide, with the curve of $a_K$ slightly higher than that of $a_2$, and both curves decrease linearly as $\log(\omega)$ increases. As $\omega$ characterizes the solution precision of all sketched LS subproblems, the curves in this plot indicate that as the solution precision improves, the value of $a_i$ increases. Among all tested values of $\omega$, the maximum value of $a_i$ is approximately 3. This suggests that the value of $a_i$ does not need to be too large, if high-precision solutions (related to small $\omega$) for sketched LS subproblems are not required.

Based on the aforementioned facts, we conducted further experiments. Specifically, we set $a_i = \mathrm{NoI}$ (the number of iterations of the sketched LS subproblems) for all $i$. The right plot in Figure \ref{fig-determine-ai} shows the curve of the computing time of SLSE-FRS versus the value of $\mathrm{NoI}$. This curve indicates that the total computing time of SLSE-FRS achieves its minimum at $\mathrm{NoI}=2$, while its total computing time at $\mathrm{NoI}=3$ is very close to that at $\mathrm{NoI}=2$. Based on these observations, we recommend using $a_i = 2$ or 3 for all $i$ in subsequent experiments.

\begin{figure}[htbp]
	\centering
	\subfigure[$a_2$, $a_K$ vs. $\omega$.]{\includegraphics[width=3in]{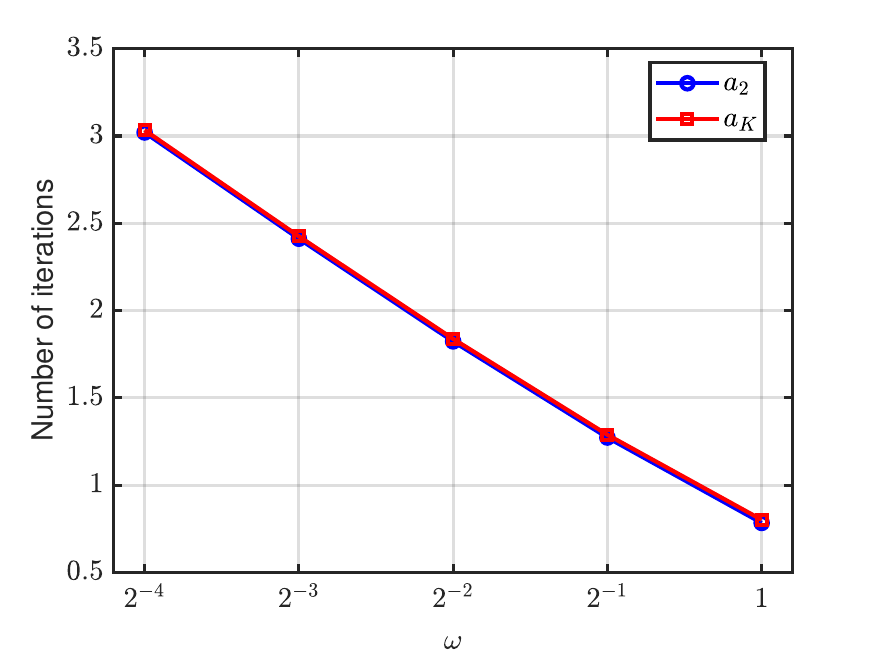}}
	\subfigure[Computing time vs. $a_i=\mathrm{NoI}$ (number of iterations).]{\includegraphics[width=3in]{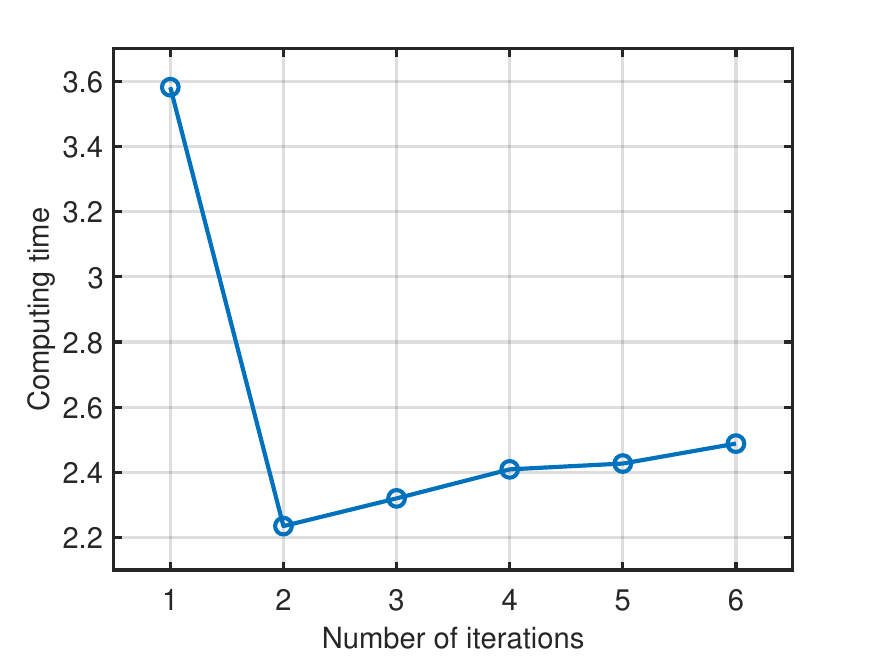}}
	\caption{The iteration control parameter $a_i$ for SLSE-FRS: $a_2$, $a_K$ versus $\omega$; computing time versus $a_i=\mathrm{NoI}$ (the number of iterations of the sketched LS subproblems) for all $i$.}
	\label{fig-determine-ai}
	\vskip -0.1in
\end{figure}

\subsection{Precision and efficiency}
%\subsection{Experiment \uppercase\expandafter{\romannumeral1}}
This part is designed to verify two key things: the first is that the output of SLSE-FRS can achieve the LS error \(\Delta\), and the second is that SLSE-FRS improves the state-of-the-art computing time for high-precision LS estimators.

We test algorithms on LS problems with \(N \in \{2^{17}, 2^{18}, 2^{19}, 2^{20}\}\), $d=2^6$, and \( \kappa=10^{4}\). We take \(a_i = 2\) in SLSE-FRS. In Figure \ref{fig1}, the left plot illustrates that the prediction error \(\Delta_T \) of SLSE-FRS achieves the same error-level as the LS error \(\Delta \) of the OLS estimator (approximately $\sigma^2 d = 6.4\times10^{-7}$ with the noise variance $\sigma^2=10^{-8}$). This plot confirms that SLSE-FRS has achieved the OLS precision.

Additionally, we construct a special LS problem with \(N = 2^{20}\) and \(d = 2\) to show the convergence paths of SLSE-FRS and IDS. Specifically, we draw 100 convergence paths per method in the two-dimensional plane based on their independent runs. According to the right plot of Figure \ref{fig1}, it can be clearly observed that, compared to the iteration paths of IDS, those of SLSE-FRS are more concentrated. This phenomenon arises because SLSE-FRS can better utilize the sketched LS estimators as control points for the iteration paths, thereby determining a more accurate and stable search direction.

\begin{figure}[htbp]
	\centering
	\subfigure[LS error \(\Delta\) and prediction error \(\Delta_T \) of SLSE-FRS.]{\includegraphics[width=3in]{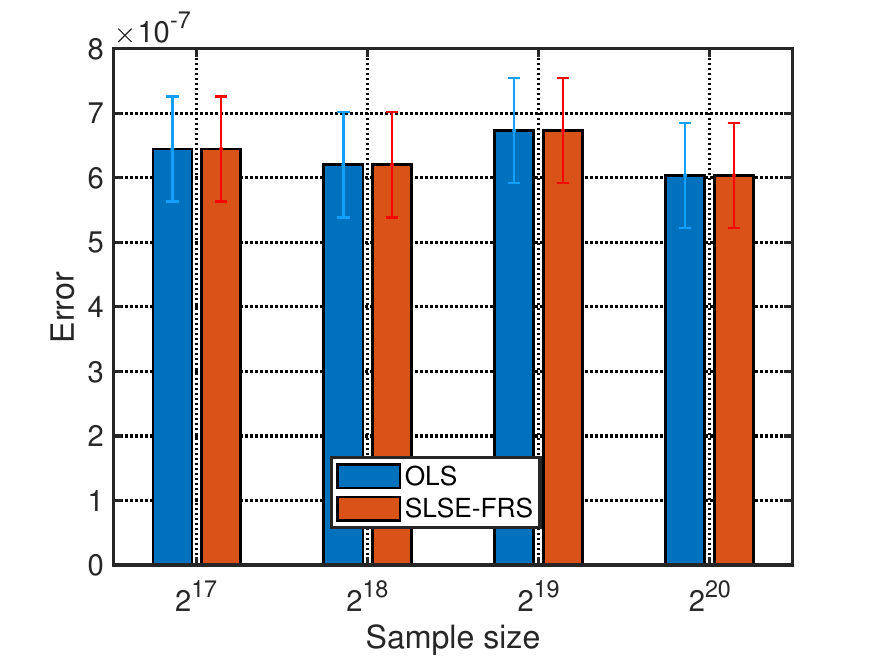}}
	\subfigure[Iteration paths of SLSE-FRS and IDS.]{\includegraphics[width=3in]{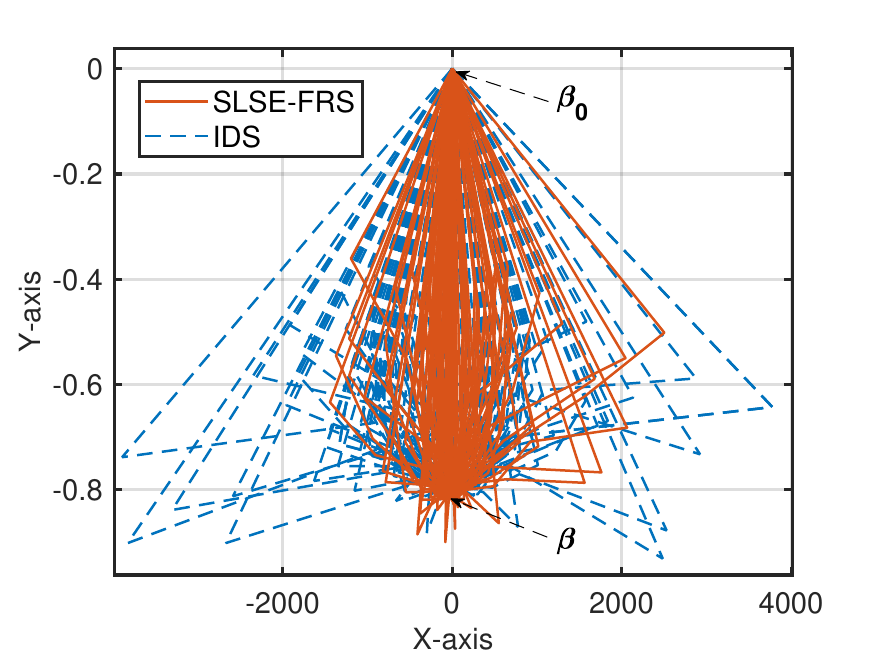}}
	\caption{\(\Delta_T \) of SLSE-FRS and \(\Delta\) of OLS and iteration paths of SLSE-FRS and IDS.}
	\label{fig1}
	\vskip -0.1in
\end{figure}

Figures \ref{fig17}--\ref{fig20} present the LS problems with $N \in \{2^{17}, 2^{18}, 2^{19}, 2^{20}\}$, $d = 2^{6}$ and $\kappa = 10^4$. These figures illustrate \(\Delta_t\) versus iteration count and actual computing time in seconds of SLSE-FRS against IDS and PCG. As shown in figures, SLSE-FRS converges much faster than PCG and IDS. Furthermore, SLSE-FRS demonstrates a significant improvement in computational efficiency. The computing time for IDS is approximately twice that of SLSE-FRS, while PCG takes roughly three times as long.

\begin{figure}[htbp]
	\centering
	\vskip 0.1in
	\subfigure[\(N = 2^{17}, d = 2^6, \kappa = 10^4\)]{\includegraphics[width=3in]{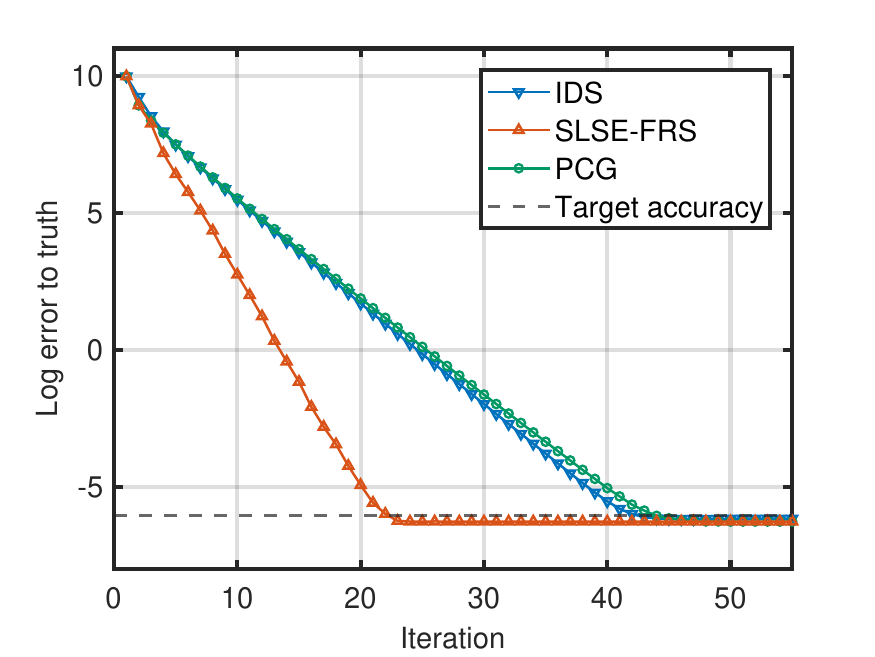}}
	\subfigure[\(N = 2^{17}, d = 2^6, \kappa = 10^4\)]{\includegraphics[width=3in]{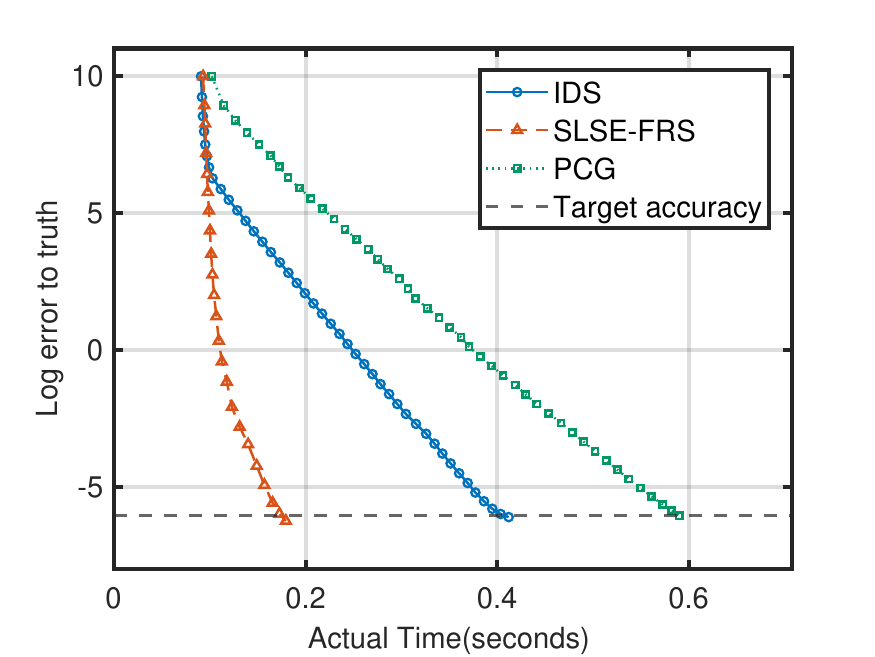}}
	\caption{\(\Delta_t \) versus iterations and actual computing time for SLSE-FRS, IDS and PCG.}
	\label{fig17}
	\vskip -0.1in
\end{figure}

\begin{figure}[htbp]
	\centering
	\vskip 0.1in
	\subfigure[\(N = 2^{18}, d = 2^6, \kappa = 10^4\)]{\includegraphics[width=3in]{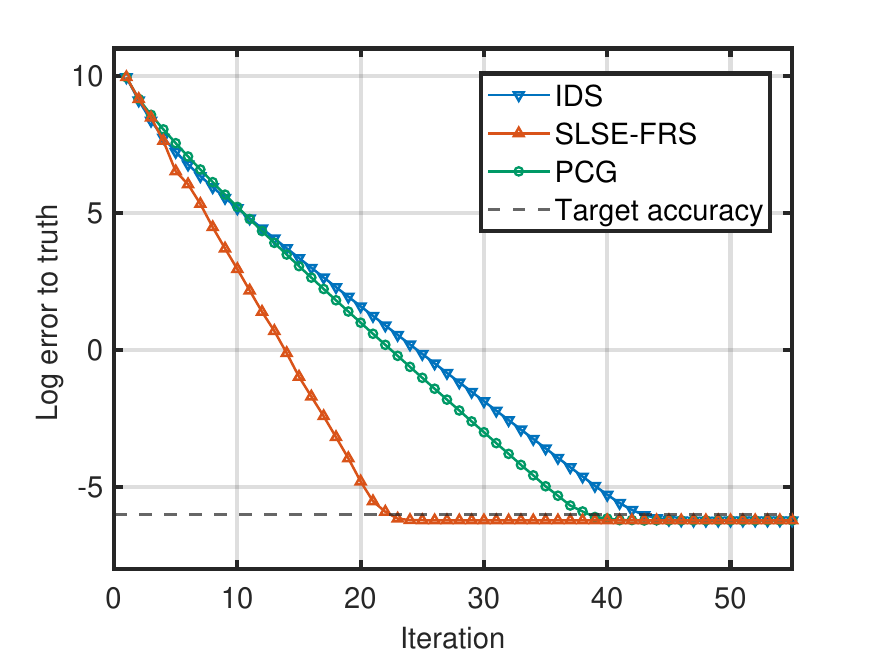}}
	\subfigure[\(N = 2^{18}, d = 2^6, \kappa = 10^4\)]{\includegraphics[width=3in]{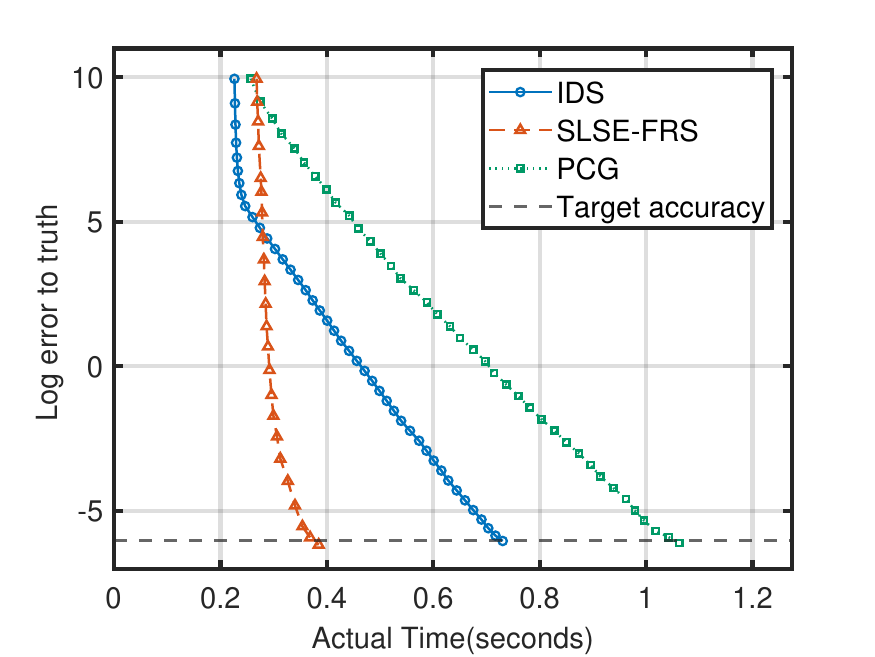}}
	\caption{\(\Delta_t \) versus iterations and actual computing time for SLSE-FRS, IDS and PCG.}
	\label{fig18}
	\vskip -0.1in
\end{figure}

\begin{figure}[htbp]
	\centering
	\vskip 0.1in
	\subfigure[\(N = 2^{19}, d = 2^6, \kappa = 10^4\)]{\includegraphics[width=3in]{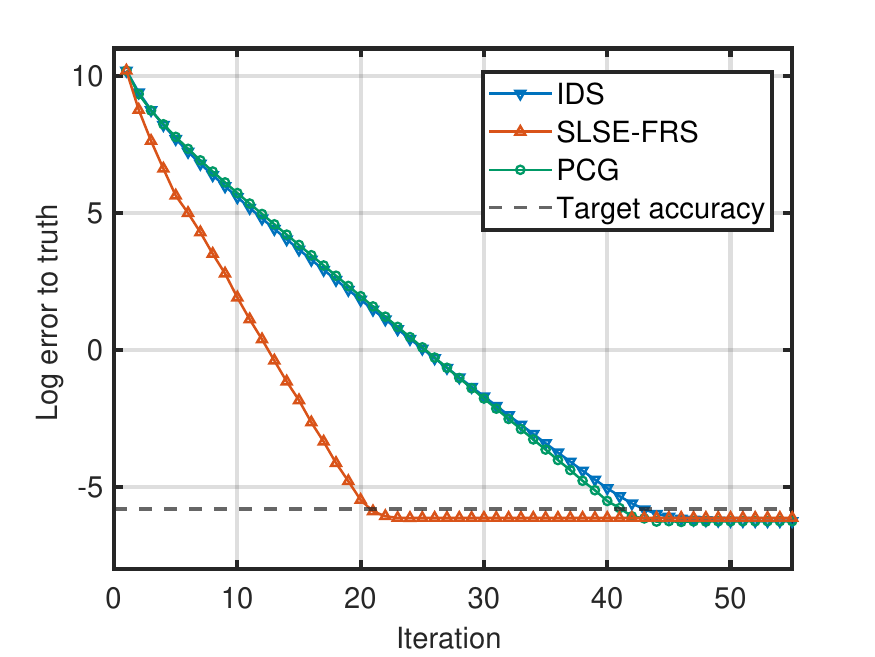}}
	\subfigure[\(N = 2^{19}, d = 2^6, \kappa = 10^4\)]{\includegraphics[width=3in]{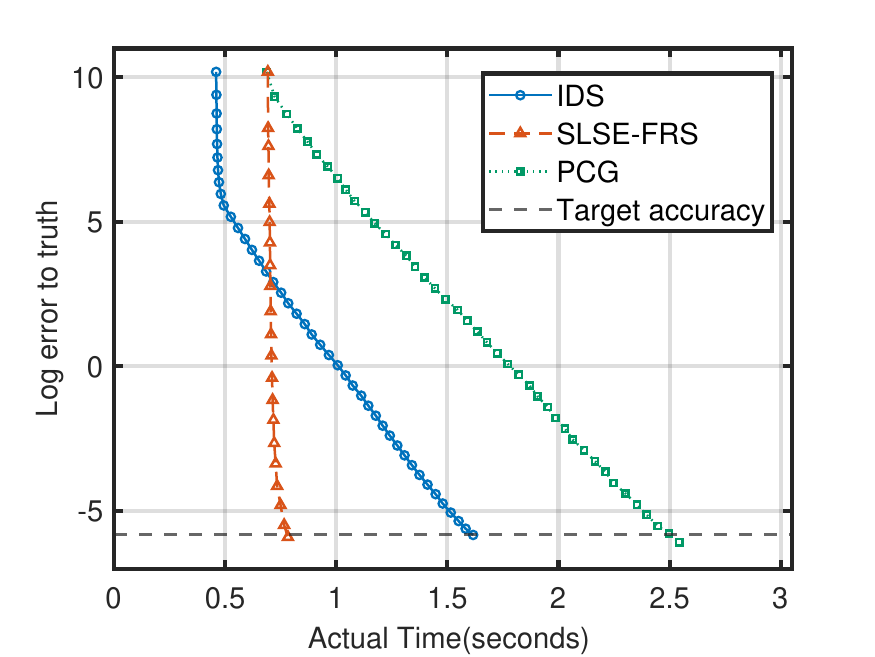}}
	\caption{\(\Delta_t \) versus iterations and actual computing time for SLSE-FRS, IDS and PCG.}
	\label{fig19}
	\vskip -0.1in
\end{figure}

\begin{figure}[htbp]
	\centering
	\vskip 0.1in
	\subfigure[\(N = 2^{20}, d = 2^6, \kappa = 10^4\)]{\includegraphics[width=3in]{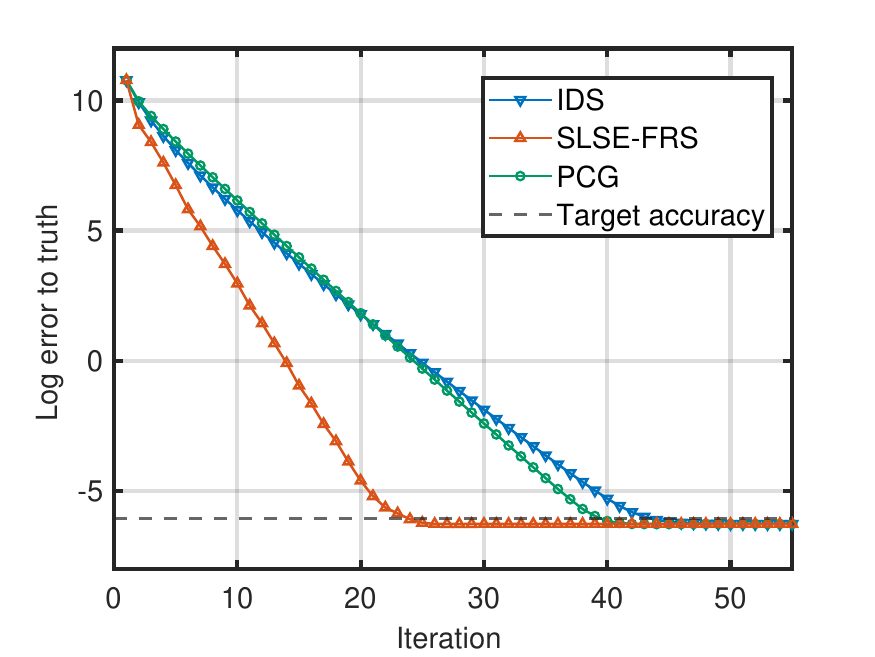}}
	\subfigure[\(N = 2^{20}, d = 2^6, \kappa = 10^4\)]{\includegraphics[width=3in]{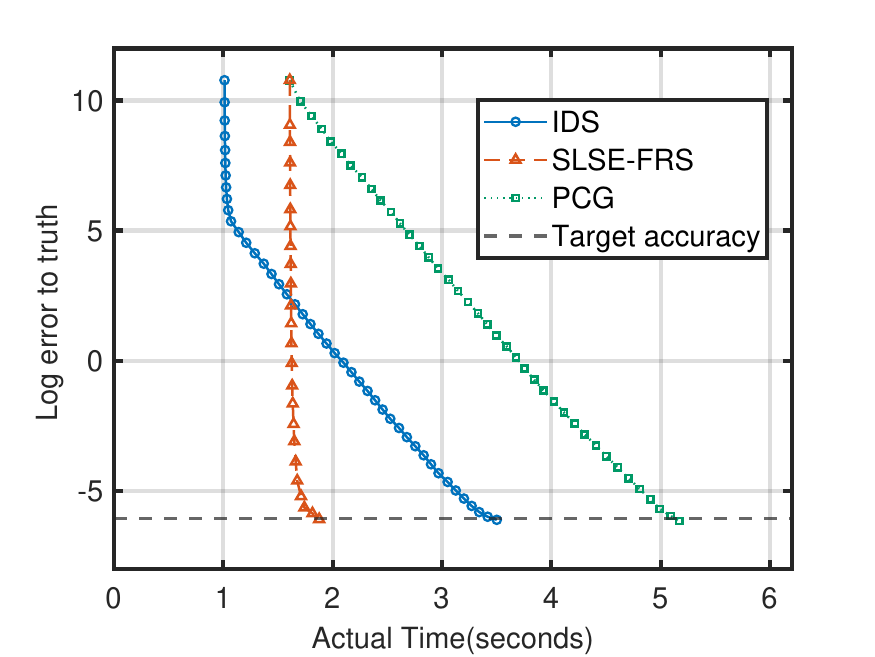}}
	\caption{\(\Delta_t \) versus iterations and actual computing time for SLSE-FRS, IDS and PCG.}
	\label{fig20}
	\vskip -0.1in
\end{figure}

%\subsection{Experiment \uppercase\expandafter{\romannumeral3}}
We include M-IHS to further verify that SLSE-FRS achieves a convergence speed comparable to \textit{Iterative-Sketching} methods directly applied to the full-scale LS problem while significantly reducing computational costs. Under the setting of \(N = 2^{20},\ d = 2^{6},\ \kappa = 10^{8} \), Figure \ref{fig6} clearly demonstrates the rationality of the SLSE-FRS framework design. SLSE-FRS strategically utilizes an optimal quantity of data in each iteration, ensuring efficiency and precision. In contrast, M-IHS may involve an excessive amount of data in early iterations, unnecessarily increasing computational costs.

\begin{figure}[htbp]
	\centering
	\vskip 0.1in
	\subfigure[\(N = 2^{20}, d = 2^6, \kappa = 10^8\)]{\includegraphics[width=3in]{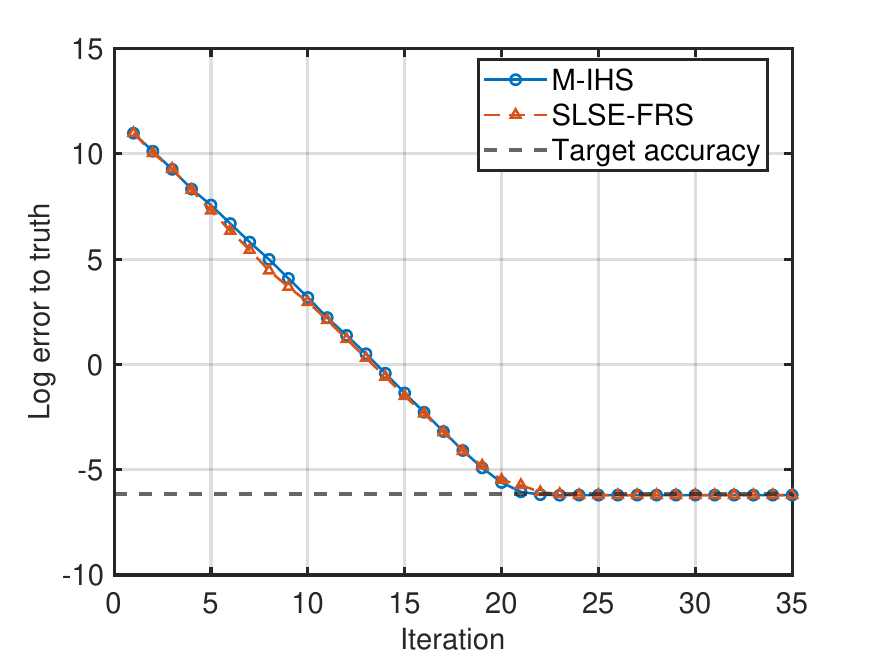}}
	\subfigure[\(N = 2^{20}, d = 2^6, \kappa = 10^8\)]{\includegraphics[width=3in]{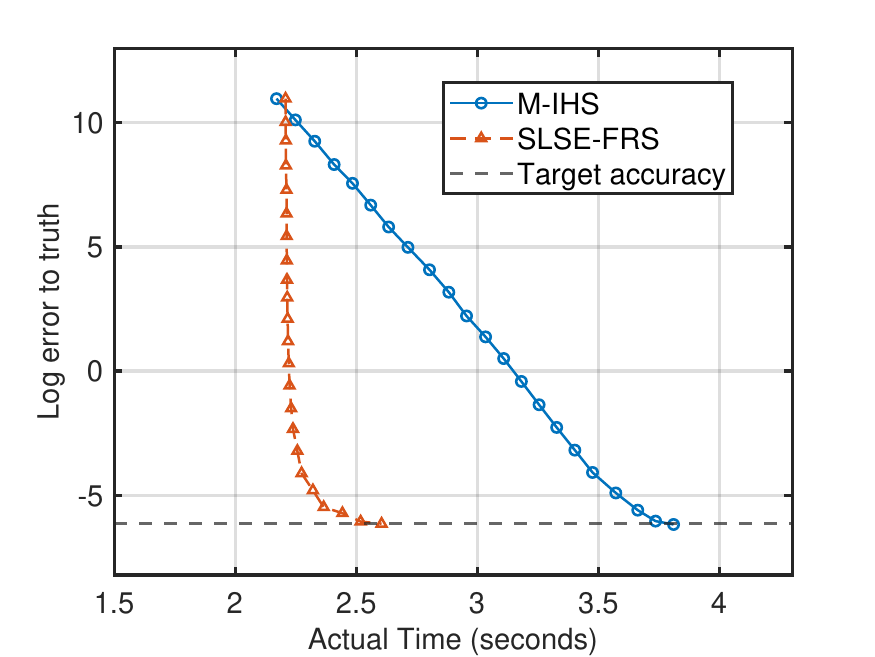}}
	\caption{\(\Delta_t \) versus iterations and actual computing time for SLSE-FRS and M-IHS.}
	\label{fig6}
	\vskip -0.1in
\end{figure}

Moreover, we provide one larger-scale experiment with \(N = 2^{20},\ d = 2^{10},\ \kappa = 10^{8} \) in Figure \ref{fig5}. We only compare IDS with SLSE-FRS, excluding PCG and M-IHS due to their much longer computing time. SLSE-FRS still maintains its superior performance in this setting.
\begin{figure}[htbp]
	\centering
	\vskip 0.1in
	\subfigure[\(N = 2^{20}, d = 2^{10}, \kappa = 10^8\)]{\includegraphics[width=3in]{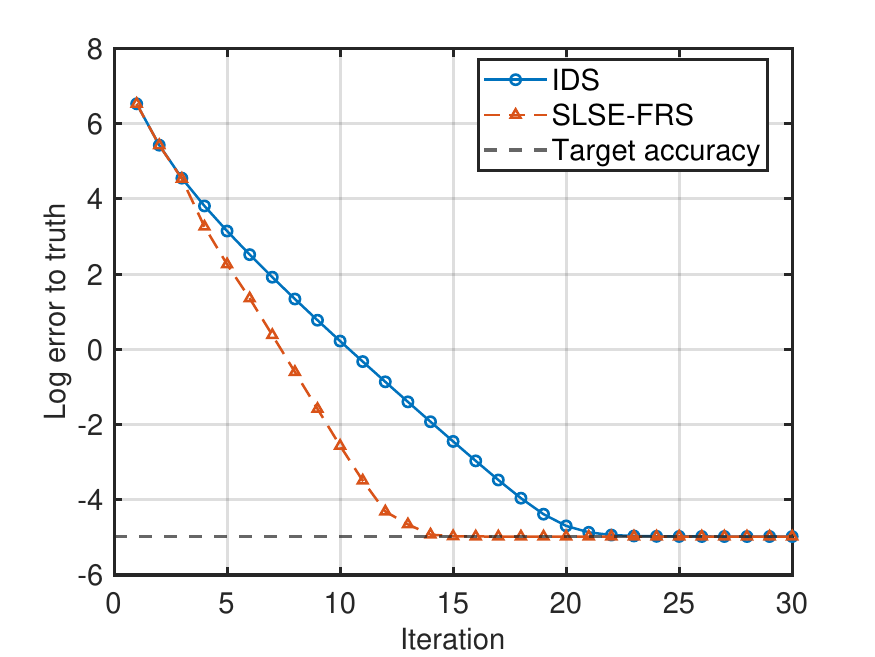}}
	\subfigure[\(N = 2^{20}, d = 2^{10}, \kappa = 10^8\)]{\includegraphics[width=3in]{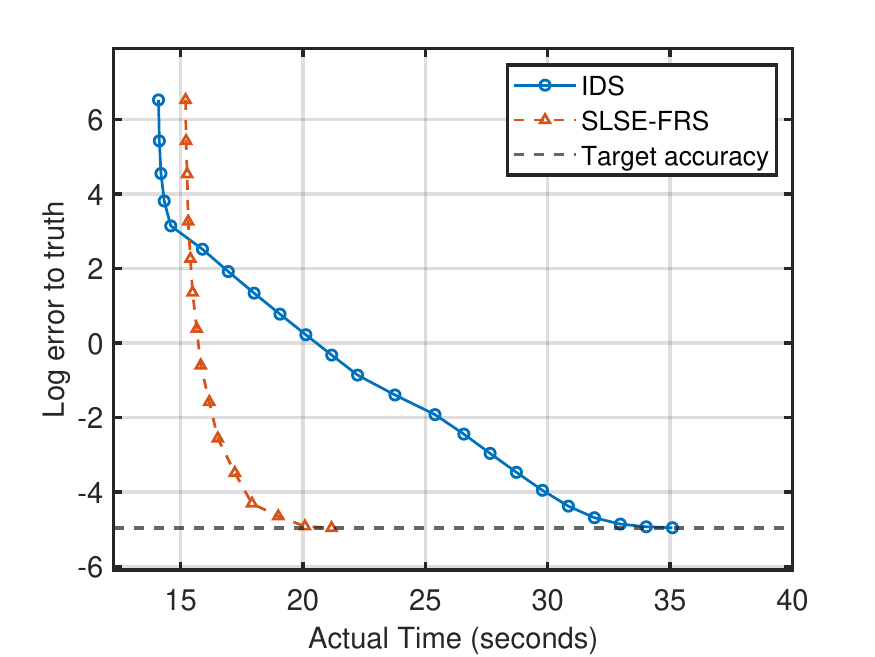}}
	\caption{\(\Delta_t \) versus iterations and actual computing time for SLSE-FRS and IDS.}
	\label{fig5}
	\vskip -0.1in
\end{figure}

%\subsection{Comparison of computing time}
Table \ref{table} presents the computing time required for SLSE-FRS and the state-of-the-art methods (i.e., IDS and PCG) to achieve the target precision under the setting of \(N \in \{2^{17}, 2^{18}, 2^{19}, 2^{20}, 2^{22}\}\), $d=2^6$, and $\kappa\in \{10^4,10^8\}$. As shown in this table, SLSE-FRS outperforms IDS and PCG. The computing time of IDS is twice that of SLSE-FRS, while the computing time of PCG is three times that of SLSE-FRS. It is noteworthy that the data sketching overhead of SLSE-FRS is relatively high, and it can be further improved by replacing the SRHT sketching matrix with a more computationally efficient sketching matrix, e.g., the CountSketch matrix.

\begin{table*}[htbp]
	\caption{Actual computing time for SLSE-FRS, IDS and PCG.}
	\label{table}
	\vskip 0.1in
	\centering
	\tabcolsep=0.02\linewidth
	\begin{center}
		\begin{small}
			\begin{sc}
				\begin{tabular}{cccccc}
					\toprule
					 & \multicolumn{4}{c}{$d=2^6$ and \(\kappa = 10^4\)} & $d=2^6$ and \(\kappa = 10^8\) \\
					\midrule
					$N$ & \(2^{17}\) & \(2^{18}\) & \(2^{19}\) & \(2^{20}\) & \(2^{22}\) \\
					\midrule
					IDS  		  & 0.4128 & 0.7316 & 1.6128 & 3.5146 & 18.8237  \\
					PCG           & 0.5968 & 1.0623 & 2.5471 & 5.1601 & 28.7034  \\
					SLSE-FRS  	  & \pmb{0.1791} & \pmb{0.3841} & \pmb{0.7854} & \pmb{1.8701} & \pmb{10.0517}  \\
					\bottomrule
				\end{tabular}
			\end{sc}
		\end{small}
	\end{center}
	\vskip 0.1in
\end{table*}

\subsection{Sketch size tuning}
%\subsection{Experiment \uppercase\expandafter{\romannumeral2}}

%In Experiment \uppercase\expandafter{\romannumeral2},
In this experiment, we consider the possibility of further improving the computational efficiency of SLSE-FRS by tuning its sketch size sequence $m_i$ in large and challenging LS problems. For instance, we let the sample size \(N = 2^{22}\), the feature size \(d = 2^6\), and the condition number \(\kappa = 10^8\), aiming to demonstrate the flexibility and potential of SLSE-FRS for large-scale and difficult problems. If we continue to increase the sketch size by a factor of 2, we will inevitably end up solving several large-scale sketched LS subproblems (e.g., \(m_i = 2^{20}\) or \(2^{21}\)), which remains computationally expensive. Therefore, we opted for the \textbf{tuning technique} as a more flexible approach. We only construct a few large-scale sketched LS subproblems. In this test, SLSE-FRS-tuning only constructs sketched LS subproblems of small sketch sizes \(\{2^{10}, 2^{11}, 2^{12}, 2^{13}, 2^{14}, 2^{15}, 2^{16}\}\) and one larger sketch size \(\{2^{19}\}\). We set \(a_i = 3\) for each sketched LS subproblem.

In Figure \ref{fig3}, we demonstrate the performance of SLSE-FRS, IDS, PCG and SLSE-FRS-tuning. We observe that SLSE-FRS maintains a significant lead in both convergence rate and computational efficiency, thereby improving the state-of-the-art performance of high-precision LS estimators. Combining with the tuning technique, SLSE-FRS-tuning can further reduce the computing time.

We have only found a sequence of sketch sizes that improved the computational efficiency of SLSE-FRS in this experiment, but we currently do not have a good strategy to automatically tune this sequence. The experimental results indicate that the tuning technique has the potential to enhance the computational efficiency of SLSE-FRS, which deserves further research in the future.
\begin{figure}[htbp]
	\centering
	\vskip 0.1in
	\subfigure[\(N = 2^{22}, d = 2^6, \kappa = 10^8\)]{\includegraphics[width=3in]{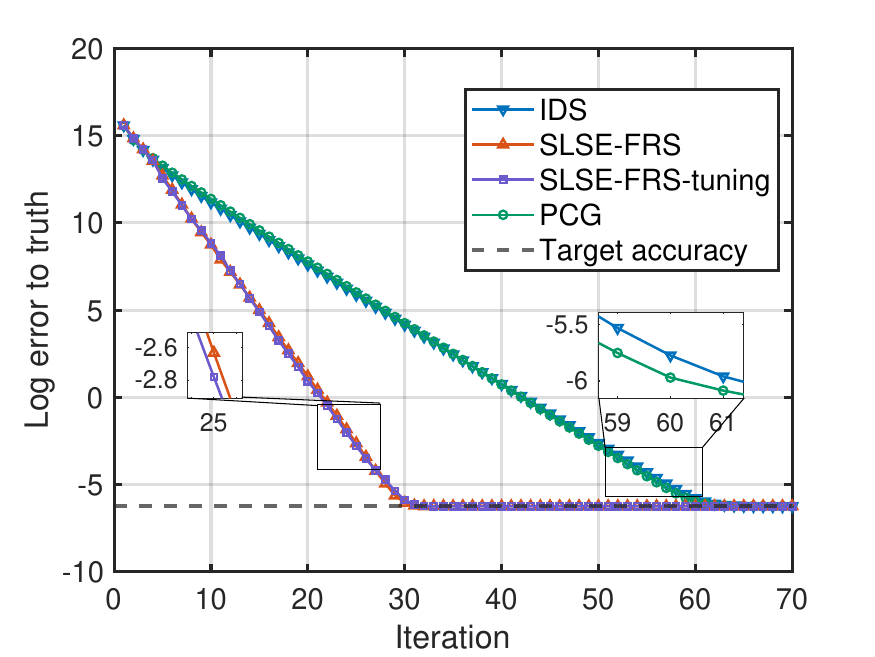}}
	\subfigure[\(N = 2^{22}, d = 2^6, \kappa = 10^8\)]{\includegraphics[width=3in]{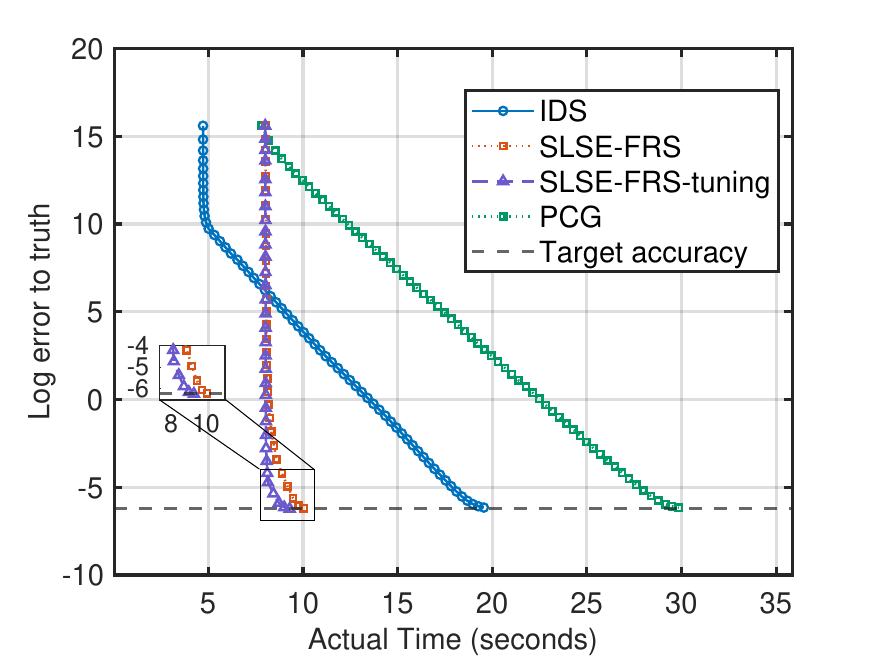}}
	\caption{\(\Delta_t \) versus iterations and actual computing time for SLSE-FRS, SLSE-FRS-tuning, IDS and PCG.}
	\label{fig3}
	\vskip -0.1in
\end{figure}

\subsection{An alternative sketching matrix} \label{altSktchMat}

With the test problem under the setting of \(N = 2^{22}\), \(d = 2^6\), and \(\kappa = 10^8\), we replace SRHT with CountSketch as the sketching matrix. In Figure \ref{fig4}, we have observed a similar convergence rate for both SLSE-FRS with SRHT (SLSE-FRS-SRHT) and SLSE-FRS with CountSketch (SLSE-FRS-CS), while the initialization time of SLSE-FRS-CS is significantly reduced. This experiment indicates that SLSE-FRS, combined with a computationally more efficient sketching matrix, has great potential to further improve its computational efficiency.

\begin{figure}[htbp]
	\centering
	\vskip 0.1in
	\subfigure[\(N = 2^{22}, d = 2^6, \kappa = 10^8\)]{\includegraphics[width=3in]{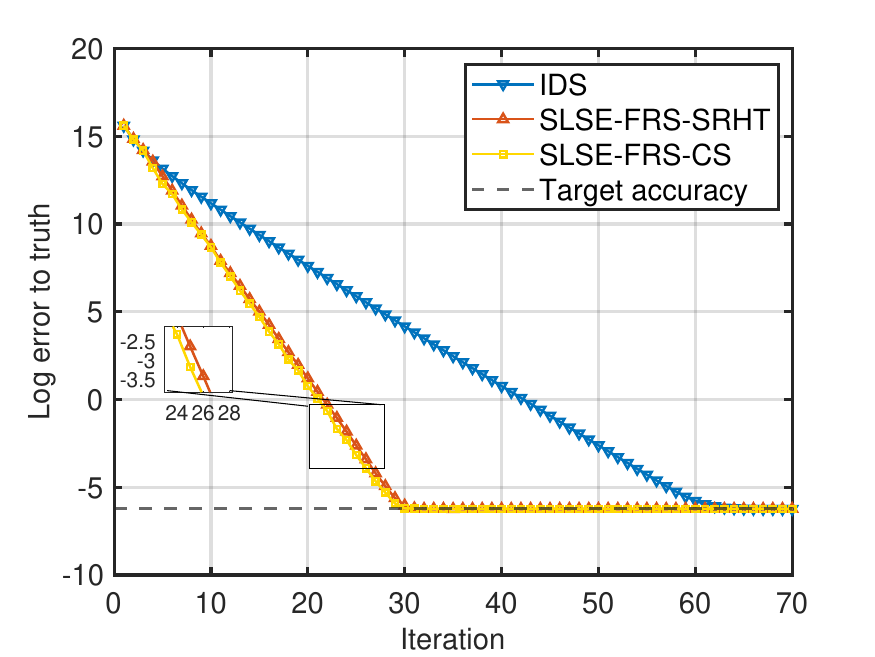}}
	\subfigure[\(N = 2^{22}, d = 2^6, \kappa = 10^8\)]{\includegraphics[width=3in]{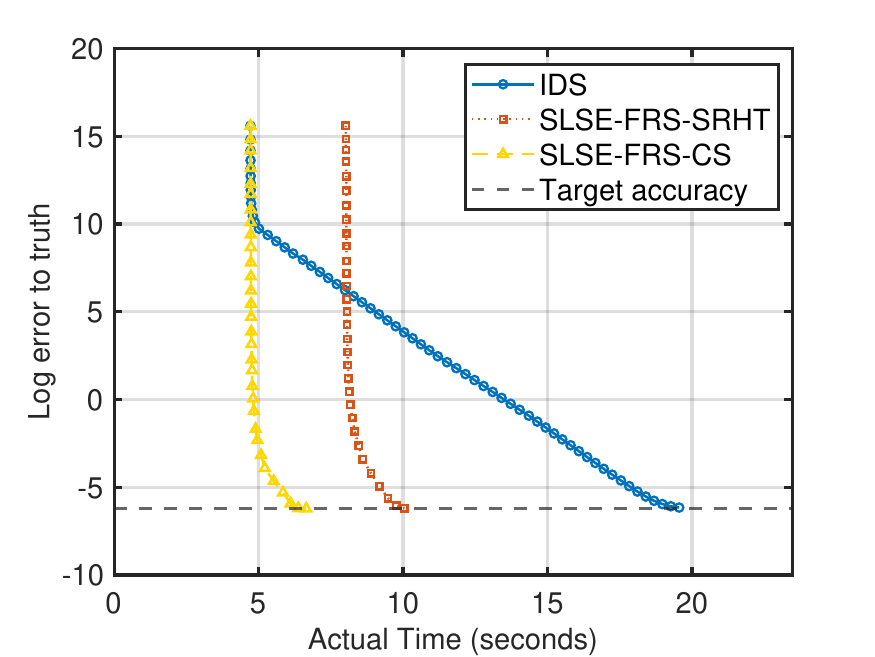}}
	\caption{\(\Delta_t \) versus iterations and actual computing time for SLSE-FRS-SRHT, SLSE-FRS-CS and IDS.}
	\label{fig4}
	\vskip -0.1in
\end{figure}

\section{Conclusions} \label{sec:conclusion}
In this work, we introduced the novel SLSE-FRS framework for large-scale estimation problem of linear statistical models which iteratively constructs and solves sketched LS subproblems with increasing sketch sizes. We investigated the theoretical properties of SLSE-FRS. Numerical experiments demonstrate that SLSE-FRS significantly improves the performance of the state-of-the-art methods. However, several challenges remain open for further investigation.

Although the LS solver M-IHS was used to construct an efficient SLSE-FRS implementation, other alternative LS solvers can also be applied to this novel framework. In future work, more LS solvers can be explored to address a wider range of scenarios.

The appropriate determination of $a_i$ and $m_i$ have significant importance for improving the computational efficiency of SLSE-FRS. Currently, our study on them is still limited, and they deserve further exploration in the future.

In Section \ref{altSktchMat}, we presented the heuristic results of SLSE-FRS with the more efficient CountSketch sketching matrix. Given that most of the computing time is attributed to initialization, it is worthwhile to consider replacing SRHT with CountSketch or other efficient alternatives, and study the related theories of SLSE-FRS with these alternative sketching strategies.

\par

% In the unusual situation where you want a paper to appear in the
% references without citing it in the main text, use \nocite
\bibliography{Ref}

%%%%%%%%%%%%%%%%%%%%%%%%%%%%%%%%%%%%%%%%%%%%%%%%%%%%%%%%%%%%%%%%%%%%%%%%%%%%%%%
%%%%%%%%%%%%%%%%%%%%%%%%%%%%%%%%%%%%%%%%%%%%%%%%%%%%%%%%%%%%%%%%%%%%%%%%%%%%%%%
% APPENDIX
\appendix
\section{Appendix}
%%%%%%%%%%%%%%%%%%%%%%%%%%%%%%%%%%%%%%%%%%%%%%%%%%%%%%%%%%%%%%%%%%%%%%%%%%%%%%%
%%%%%%%%%%%%%%%%%%%%%%%%%%%%%%%%%%%%%%%%%%%%%%%%%%%%%%%%%%%%%%%%%%%%%%%%%%%%%%%
\subsection{Notations}
The notation used in theoretical analysis is defined as follows. We denote by \( X= U_X D_X V_X^\top \) the reduced singular value decomposition of \( X \), where \( U_X \) is a column orthonormal matrix of size \( N \times d \), \( V_X \) is a orthogonal matrix of size \( d \times d \), and \( D_X \) is a diagonal matrix of size \( d \times d \). The variance denoted by \(\operatorname{Var}[\cdot]\) is taken over the random noise \(\zeta\).

For $\epsilon \in (0,1)$, we define the "good" events
\[
\mathcal{E}_{\epsilon} := \bigcap_{i=1}^{K} \left\{ \left\| U^{\top} S_{i}^{\top} S_{i} U - I_{d} \right\| \leq \epsilon \right\}
\]
and
\[
\hat{\mathcal{E}}_{\epsilon} := \left\{ \left\| U^{\top} \hat{S}^{\top} \hat{S} U - I_{d} \right\| \leq \epsilon \right\}.
\]

%For \(i=1,\ldots, K\), let
%\[
%\tilde{\beta}^{i} := \arg\min_{\beta \in \mathbb{R}^{d}} \frac{1}{2} \| S_{i}(X \beta - Y) \|^{2} = (X^{\top} S_{i}^{\top} S_{i} X)^{-1} X^{\top} S_{i}^{\top} S_{i} Y.
%\]

\subsection{Proof of Theorem \ref{thm:convergence_theorem1}}\label{B}
In this subsection, we start with some useful lemmas. After that, we present the proof of this theorem.
\begin{lemma}\label{lemma1}
	Suppose the conditions of Theorem \ref{thm:convergence_theorem1} hold. Then for any column orthonormal matrix \( {U} \in \mathbb{R}^{N \times d} \), we have
	\[
	\Pr\left\{ \bigcup_{i=1}^{K} \left\{ \left\| U^{\top} S_i^{\top} S_i U - I_d \right\| > \epsilon \right\} \right\} \leq \delta,
	\]
	and
	\[
	\Pr\left\{ \left\| U^{\top} \hat{S}^{\top} \hat{S} U - I_d \right\| > \epsilon \right\} \leq \delta.
	\]
\end{lemma}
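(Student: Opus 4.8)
The plan is to reduce both inequalities to Theorem~\ref{thm:bigtheorem}, the single-sketch subspace-embedding guarantee for SRHT, handling the first claim by a union bound over the $K$ subproblem sketches and the second by a direct application.

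For the second inequality the argument is immediate: $\hat{S}$ is one SRHT matrix of size $r \times N$, and by hypothesis $r \geq c\epsilon^{-2}[d + \log(N/\delta)]\log(ed/\delta)$, which is exactly the sketch-size requirement of Theorem~\ref{thm:bigtheorem}. Applying that theorem with the given $\epsilon$ and $\delta$ yields $\Pr\{\|U^{\top}\hat{S}^{\top}\hat{S}U - I_d\| > \epsilon\} \leq \delta$.

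For the first inequality I would first note that each $S_i = \sqrt{N/m_i}\,B_i(HDP)$ is, marginally, a SRHT matrix of size $m_i \times N$, and that the nested construction gives $m_i \geq m_1 > r$ for every $i = 1,\ldots,K$, so each $S_i$ comfortably meets the sketch-size condition of Theorem~\ref{thm:bigtheorem}. Applying that theorem to each $S_i$ at failure level $\delta/K$ and combining through the union bound gives
\[
\Pr\left\{ \bigcup_{i=1}^{K} \left\{ \left\| U^{\top} S_i^{\top} S_i U - I_d \right\| > \epsilon \right\} \right\} \leq \sum_{i=1}^{K} \Pr\left\{ \left\| U^{\top} S_i^{\top} S_i U - I_d \right\| > \epsilon \right\} \leq K \cdot \frac{\delta}{K} = \delta.
\]
Note that the union bound requires no independence among the $S_i$, which matters because they all share the common transform $HDP$ and differ only through the samplers $B_i$.

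The one point requiring care --- and the main obstacle --- is that invoking Theorem~\ref{thm:bigtheorem} at level $\delta/K$ formally demands $m_i \geq c\epsilon^{-2}[d + \log(NK/\delta)]\log(edK/\delta)$ rather than merely $m_1 > r$. I would close this gap by exploiting $K = \log_2(N/m_1) \leq \log_2 N$, so that $\log(NK/\delta) \leq \log(N^2/\delta)$ and $\log(edK/\delta) \leq \log(ed/\delta) + \log\log_2 N$; the extra $\log K$ factors are of lower order and can be absorbed into the universal constant $c$ together with the slack already guaranteed by $m_1 > r$. Thus the per-sketch condition holds for all $i$, the union bound closes, and the first inequality follows.
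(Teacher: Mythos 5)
Your proposal is correct and follows essentially the same route as the paper's proof: apply Theorem~\ref{thm:bigtheorem} to each $S_i$ at failure level $\delta/K$, combine via the union bound (which needs no independence), and handle $\hat{S}$ by a direct application at level $\delta$. Your explicit remark that the extra $\log K$ factors are absorbed into the constant $c$ (using $K \leq \log_2 N$) is exactly what the paper's phrase ``for a sufficiently large constant $c$'' leaves implicit, so you have simply made the same step more precise.
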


\begin{proof}[\textbf{Proof}]
	For a sufficiently large constant \(c\), Theorem \ref{thm:bigtheorem} ensures the following condition
	\[
	\Pr\left\{\left\|{U}^{\top} {S}_i^{\top} {S}_i {U} - {I}_d \right\| > \epsilon\right\} \leq \frac{\delta}{K}, \quad i = 1, \ldots, K.
	\]
	Since we have \(K\) sketched LS subproblems, by applying the union bound to the above \(K\) events, the first conclusion follows.
	
	The construction of \(\hat{{S}}\) is independent of \({S}_i\) for \(i = 1, \ldots, K\). Using a similar argument, the second conclusion also holds.
\end{proof}

% Lemma 1
\begin{lemma}\label{lemma:lemma1}
	For any column orthonormal matrix \( {U} \in \mathbb{R}^{N \times d} \), conditioned on the event \( \mathcal{E}_{\epsilon} \), for \( i = 1, \ldots, K \), we have
	\[
	\|({U}^{\top}{S}_{i}^{\top}{S}_{i}{U})^{-1}\| \leq \frac{1}{1 - \epsilon},
	\]
	and conditioned on the event \( \hat{\mathcal{E}}_{\epsilon} \), we have
	\[
	\|({U}^{\top}\hat{{S}}^{\top}\hat{{S}}{U})^{-1}\| \leq \frac{1}{1 - \epsilon}.
	\]
\end{lemma}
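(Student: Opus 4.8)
The plan is to reduce both claims to the elementary fact that a symmetric matrix within operator-norm distance $\epsilon < 1$ of the identity has all of its eigenvalues bounded below by $1 - \epsilon > 0$. Fix $i \in \{1, \ldots, K\}$ and write $M := U^{\top} S_i^{\top} S_i U$. Since $M = (S_i U)^{\top}(S_i U)$ is a Gram matrix, it is symmetric and positive semidefinite, so all of its eigenvalues are real and nonnegative, and its spectral norm coincides with its largest eigenvalue.

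First I would translate the hypothesis $\| M - I_d \| \leq \epsilon$, which holds on the event $\mathcal{E}_{\epsilon}$, into a two-sided spectral bound. Diagonalizing $M = Q \Lambda Q^{\top}$ with $Q$ orthogonal, we have $M - I_d = Q(\Lambda - I_d)Q^{\top}$, so the eigenvalues of $M - I_d$ are precisely $\lambda_j - 1$, where $\lambda_1, \ldots, \lambda_d$ denote the eigenvalues of $M$. As $M - I_d$ is symmetric, $\| M - I_d \|$ equals $\max_j |\lambda_j - 1|$, so the bound $\| M - I_d \| \leq \epsilon$ forces $1 - \epsilon \leq \lambda_j \leq 1 + \epsilon$ for every $j$. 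Since $\epsilon \in (0, 1/10) \subset (0,1)$ under the standing assumptions of Theorem \ref{thm:convergence_theorem1}, the lower bound $1 - \epsilon > 0$ guarantees that $M$ is positive definite, and hence invertible.

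Next I would pass to the inverse. The eigenvalues of $M^{-1}$ are exactly the reciprocals of those of $M$, so they lie in $[\, 1/(1+\epsilon),\, 1/(1-\epsilon)\,]$. Because $M^{-1}$ is again symmetric positive definite, its spectral norm equals its largest eigenvalue, which is at most $1/(1-\epsilon)$. This yields
\[
\| (U^{\top} S_i^{\top} S_i U)^{-1} \| \leq \frac{1}{1 - \epsilon},
\]
and since the argument used only that the matrix in question is a Gram matrix lying within $\epsilon$ of $I_d$, it applies verbatim with $S_i$ replaced by $\hat{S}$ on the event $\hat{\mathcal{E}}_{\epsilon}$, delivering the second bound. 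The intersection over $i = 1, \ldots, K$ built into the definition of $\mathcal{E}_{\epsilon}$ then establishes the claim for every $i$ simultaneously.

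This argument is entirely routine, and there is no genuine obstacle. The only point requiring a moment of care is verifying invertibility \emph{before} bounding the inverse --- that is, invoking $\epsilon < 1$ to ensure the smallest eigenvalue $1 - \epsilon$ is strictly positive, so that $(U^{\top} S_i^{\top} S_i U)^{-1}$ is well defined in the first place. Everything else follows directly from the symmetry of Gram matrices and the spectral interpretation of the operator norm.
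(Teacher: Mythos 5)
Your proof is correct, but it follows a different route from the paper's. The paper's proof is a one-line perturbation argument: writing $U^{\top}S_i^{\top}S_iU = I_d + (U^{\top}S_i^{\top}S_iU - I_d)$, it invokes the Neumann-series bound $\|(I_d + E)^{-1}\| \leq 1/(1 - \|E\|)$, valid for \emph{any} square matrix $E$ with $\|E\| < 1$, and then substitutes $\|E\| \leq \epsilon$. Your argument instead exploits the Gram structure of $U^{\top}S_i^{\top}S_iU$: symmetry gives a spectral decomposition, the hypothesis $\|M - I_d\| \leq \epsilon$ pins every eigenvalue into $[1-\epsilon, 1+\epsilon]$, and the norm of the inverse is read off as the reciprocal of the smallest eigenvalue. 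Each approach has a small advantage: the paper's is shorter and does not need symmetry at all (so it would survive in settings where the sketched Gram matrix is replaced by a non-symmetric approximation), while yours makes invertibility explicit before the inverse is written down, delivers the two-sided containment $1/(1+\epsilon) \leq \|M^{-1}\|^{-1}$ as a by-product, and is arguably more transparent about \emph{why} the bound holds. One minor remark: the Neumann bound the paper uses also certifies invertibility (the series $\sum_k (-E)^k$ converges when $\|E\| < 1$), so the "point requiring care" you flag is handled automatically in the paper's route; still, stating it explicitly as you do is good practice.
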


\begin{proof}[\textbf{Proof}]
	Conditioned on the event \( \mathcal{E}_{\epsilon} \), for \( i = 1, \ldots, K \), we have
	\[
	\|(U^{\top}S_{i}^{\top}S_{i}U)^{-1}\| = \|(I_{d} + U^{\top}S_{i}^{\top}S_{i}U - I_{d})^{-1}\| \leq \frac{1}{1 - \|U^{\top}S_{i}^{\top}S_{i}U - I_{d}\|} \leq \frac{1}{1 - \epsilon}.
	\]
	The second inequality follows in a similar way.
\end{proof}

% Lemma 3
\begin{lemma}\label{lemma333}
	For any column orthonormal matrix \( U \in \mathbb{R}^{N \times d} \), conditioned on the event \(\hat{\mathcal{E}}_{\epsilon}\), we have
	\[
	\|I_{d} - \mu(U^{\top}\hat{S}^{\top}\hat{S}U)^{-1}\|
	\leq
	\frac{\epsilon+|\mu-1|}{1-\epsilon}.
	\]
\end{lemma}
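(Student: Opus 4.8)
The plan is to reduce everything to a single submultiplicative norm estimate after an algebraic factorization. Write $A := U^{\top}\hat{S}^{\top}\hat{S}U$, so that the quantity to be bounded is $\|I_d - \mu A^{-1}\|$, and observe that $A$ is symmetric positive definite on the event $\hat{\mathcal{E}}_{\epsilon}$ (its eigenvalues lie in $[1-\epsilon, 1+\epsilon]$, and $1-\epsilon>0$ since $\epsilon<1$). The key observation is the factorization
\[
I_d - \mu A^{-1} = A^{-1}A - \mu A^{-1} = A^{-1}\bigl(A - \mu I_d\bigr),
\]
which is legitimate because $A$ commutes with $A^{-1}$ and with $I_d$. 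This pulls the inverse out as a separate factor whose norm is already controlled.

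Next I would apply submultiplicativity of the spectral norm to obtain
\[
\|I_d - \mu A^{-1}\| \leq \|A^{-1}\|\,\|A - \mu I_d\|,
\]
and then bound the two factors separately. For the first factor, Lemma \ref{lemma:lemma1} (applied to $\hat{S}$, conditioned on $\hat{\mathcal{E}}_{\epsilon}$) gives directly $\|A^{-1}\| \leq 1/(1-\epsilon)$. For the second factor, I would rewrite the deviation from a scalar multiple of the identity as
\[
A - \mu I_d = (A - I_d) - (\mu-1)I_d,
\]
and apply the triangle inequality together with $\|(\mu-1)I_d\| = |\mu-1|$ and the defining inequality of $\hat{\mathcal{E}}_{\epsilon}$, namely $\|A - I_d\| = \|U^{\top}\hat{S}^{\top}\hat{S}U - I_d\| \leq \epsilon$. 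This yields $\|A - \mu I_d\| \leq \epsilon + |\mu-1|$.

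Combining the two bounds gives exactly the claimed estimate $\|I_d - \mu A^{-1}\| \leq (\epsilon + |\mu-1|)/(1-\epsilon)$, completing the proof. There is no real obstacle here: the argument is a routine spectral-norm manipulation, and the only point requiring any care is spotting the factorization $I_d - \mu A^{-1} = A^{-1}(A-\mu I_d)$, after which everything follows from submultiplicativity, the triangle inequality, and the already-established bound in Lemma \ref{lemma:lemma1}. I would just make sure to state explicitly at the outset that all norm inequalities are taken conditioned on the good event $\hat{\mathcal{E}}_{\epsilon}$, so that both the invertibility of $A$ and the bound $\|A-I_d\|\leq\epsilon$ are available.
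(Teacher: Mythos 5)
Your proof is correct and follows essentially the same route as the paper's: the factorization $I_d - \mu A^{-1} = A^{-1}(A - \mu I_d)$, submultiplicativity of the spectral norm, the triangle inequality splitting $A - \mu I_d = (A - I_d) - (\mu-1)I_d$, and the bound $\|A^{-1}\| \leq 1/(1-\epsilon)$ from Lemma~\ref{lemma:lemma1} conditioned on $\hat{\mathcal{E}}_{\epsilon}$. No meaningful differences to report.
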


\begin{proof}[\textbf{Proof}]
	Conditioned on the event \( \hat{\mathcal{E}}_{\epsilon} \), according to Lemma \ref{lemma:lemma1}, we have \(\|(U^{\top}\hat{S}^{\top}\hat{S}U)^{-1}\| \leq \frac{1}{1 - \epsilon}\). Hence,
	\begin{align*}
		\|I_{d} - \mu(U^{\top}\hat{S}^{\top}\hat{S}U)^{-1}\| &=
		\|(U^{\top}\hat{S}^{\top}\hat{S}U)^{-1}(U_{A}^{\top}\hat{S}^{\top}\hat{S}U - \mu I_{d})\| \\
		&\leq \|(U^{\top}\hat{S}^{\top}\hat{S}U)^{-1}\| \|U^{\top}\hat{S}^{\top}\hat{S}U - \mu I_{d}\| \\
		&\leq \|(U^{\top}\hat{S}^{\top}\hat{S}U)^{-1}\| \left( \|U^{\top}\hat{S}^{\top}\hat{S}U - I_{d}\| + |\mu - 1| \right). \\
		&\leq \frac{\epsilon + |\mu-1|}{1-\epsilon}.
	\end{align*}
\end{proof}

% Lemma 3
\begin{lemma}\label{lemma:lemma2}
	For any column orthonormal matrix \( U \in \mathbb{R}^{N \times d} \), conditioned on the event \(\hat{\mathcal{E}}_{\epsilon} \cap {\mathcal{E}}_{\epsilon}\), for \( i = 1, \ldots, K \), we have
	\[
	\|I_{d} - \mu(U^{\top}\hat{S}^{\top}\hat{S}U)^{-1}U^{\top}{S}_{i}^{\top}{S}_{i}U\|
	\leq
	\frac{(\mu+1)\epsilon+|\mu-1|}{1-\epsilon}.
	\]
\end{lemma}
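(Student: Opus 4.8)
The plan is to generalize the argument in the proof of Lemma \ref{lemma333}, replacing the identity matrix there by the sketched matrix $U^{\top}S_i^{\top}S_i U$. Writing $A := U^{\top}\hat{S}^{\top}\hat{S}U$ and $B_i := U^{\top}S_i^{\top}S_i U$ for brevity, I would first factor out $A^{-1}$ on the left, using
\[
I_d - \mu A^{-1} B_i = A^{-1}(A - \mu B_i),
\]
and then apply submultiplicativity of the spectral norm to split the bound into the product $\|A^{-1}\| \cdot \|A - \mu B_i\|$.

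For the first factor, I would invoke Lemma \ref{lemma:lemma1}: conditioned on $\hat{\mathcal{E}}_{\epsilon}$, it yields $\|A^{-1}\| \leq 1/(1-\epsilon)$. The heart of the argument is the second factor, which I would treat by the additive decomposition
\[
A - \mu B_i = (A - I_d) - \mu(B_i - I_d) + (1-\mu)I_d.
\]
A triangle inequality then bounds $\|A - \mu B_i\|$ by $\|A - I_d\| + \mu\|B_i - I_d\| + |1-\mu|$, where the coefficient $\mu$ rather than $|\mu|$ is legitimate because the hypothesis $|\mu-1| \leq 1/4$ forces $\mu \geq 3/4 > 0$. Conditioning on $\hat{\mathcal{E}}_{\epsilon} \cap \mathcal{E}_{\epsilon}$ supplies both $\|A-I_d\| \leq \epsilon$ and $\|B_i-I_d\| \leq \epsilon$, so this factor is at most $(\mu+1)\epsilon + |\mu-1|$. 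Multiplying the two estimates delivers the stated bound $\frac{(\mu+1)\epsilon+|\mu-1|}{1-\epsilon}$.

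I do not expect a real obstacle here, since the claim reduces to a short spectral-norm estimate. The only subtlety worth flagging is engineering the additive decomposition so that the two identity shifts combine into the single scalar term $(1-\mu)I_d$ — this is precisely what produces the clean numerator $(\mu+1)\epsilon + |\mu-1|$ — together with the use of $\mu > 0$, guaranteed by $|\mu-1| \leq 1/4$, to bound the middle term by $\mu\epsilon$ rather than $|\mu|\epsilon$. This lemma will later serve as the per-iteration contraction estimate for the M-IHS update in the first stage of SLSE-FRS, so the explicit dependence on $\mu$ and $\epsilon$ in the numerator is the quantity that matters downstream.
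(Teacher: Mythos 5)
Your proof is correct and essentially identical to the paper's: writing $A = U^{\top}\hat{S}^{\top}\hat{S}U$ and $B_i = U^{\top}S_i^{\top}S_iU$, the paper decomposes $I_d - \mu A^{-1}B_i = (I_d - \mu A^{-1}) + \mu A^{-1}(I_d - B_i)$, bounds the first piece by Lemma \ref{lemma333} and the second by Lemma \ref{lemma:lemma1} together with the embedding event, which—once Lemma \ref{lemma333} is unfolded—is exactly your splitting $A^{-1}\left[(A - I_d) + \mu(I_d - B_i) + (1-\mu)I_d\right]$ with the same three terms, the same use of $\mu>0$, and the same final constant $\frac{(\mu+1)\epsilon+|\mu-1|}{1-\epsilon}$. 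The only cosmetic difference is the order of operations: you factor out $A^{-1}$ globally before applying the triangle inequality, while the paper applies the triangle inequality first and routes one resulting piece through the separate Lemma \ref{lemma333}.
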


\begin{proof}[\textbf{Proof}]
	Conditioned on the event \(\hat{\mathcal{E}}_{\epsilon} \cap {\mathcal{E}}_{\epsilon}\), according Lemma \ref{lemma333}, we have \(
	\|I_{d} - \mu(U^{\top}\hat{S}^{\top}\hat{S}U)^{-1}\|
	\leq
	\frac{\epsilon+|\mu-1|}{1-\epsilon}
	\). Hence, for \( i = 1, \ldots, K \), it follows that
	\begin{align*}
		\|I_{d} - \mu(U^{\top}\hat{S}^{\top}\hat{S}U)^{-1}U^{\top}{S}_{i}^{\top}{S}_{i}U\| &=
		\|I_{d} - \mu(U^{\top}\hat{S}^{\top}\hat{S}U)^{-1} + \mu(U^{\top}\hat{S}^{\top}\hat{S}U)^{-1}(I_{d} - U^{\top}{S}_{i}^{\top}{S}_{i}U)\| \\
		&\leq \|I_{d} - \mu(U^{\top}\hat{S}^{\top}\hat{S}U)^{-1}\| + \mu\|(U^{\top}\hat{S}^{\top}\hat{S}U)^{-1}\|\|I_{d} - U^{\top}{S}_{i}^{\top}{S}_{i}U\| \\
		&\leq \frac{\epsilon + |\mu-1|}{1-\epsilon} + \mu \frac{\epsilon}{1-\epsilon} \\
		&= \frac{(\mu+1)\epsilon+|\mu-1|}{1-\epsilon}.
	\end{align*}
\end{proof}

\begin{proof}[\textbf{Proof of Theorem \ref{thm:convergence_theorem1}}]
	We consider the \(i\)-th sketched LS subproblem. In the case $i = 1$, we set the initial iterates $\beta_{-1}^{1} = \beta_{0}^1 = \beta_0$. In the case \(i > 1\), we set the initial iterates \({\beta_{-1}^{i}} = {\beta_{0}^{i}} = {\beta_{a_{i-1}}^{i-1}}\) based on the iterates from the previous sketched LS subproblem. The update formula is given by
	\[
	\beta_{t + 1}^{i} = \beta_t^{i} - \mu \hat{H}^{-1} \nabla f(\beta_t; S_i X, S_i Y) + \eta ({\beta _t^{i}} - {\beta _{t - 1}^{i}}),
	\]
	which can be expanded as
	\[
	\beta_{t + 1}^{i} = \beta_{t}^{i} - \mu (X^{\top}\hat{S}^{\top}\hat{S}X)^{-1}(S_{i}X)^{\top}(S_{i}X\beta_t^{i}-S_{i}Y) + \eta({\beta _t^{i}} - {\beta_{t - 1}^{i}}).
	\]
	by subtracting the solution \(\tilde{\beta}^{i}\) of the \(i\)-th sketched LS subproblem, we obtain
	\[
	\beta_{t + 1}^{i} - \tilde{\beta}^{i} = \beta_{t}^{i} - \tilde{\beta}^{i} - \mu (X^{\top}\hat{S}^{\top}\hat{S}X)^{-1}(S_{i}X)^{\top}(S_{i}X\beta_t^i-S_{i}Y) + \eta({\beta _t^{i}} - {\tilde{\beta}^{i}}) - \eta({\beta _{t-1}^{i}} - {\tilde{\beta}^{i}}).
	\]
	Since the above iteration is a two-step scheme, we adopt the analysis framework of the heavy ball method \citep{recht2010cs726}, which suggest to consider the following bipartite relation
	\[
	\begin{bmatrix}
		\beta_{t+1}^{i} - \tilde{\beta}^{i} \\
		\beta_t^{i} - \tilde{\beta}^{i}
	\end{bmatrix}
	=
	\begin{bmatrix}
		(1 + \eta)I_d - \mu(X^{\top}\hat{S}^{\top}\hat{S}X)^{-1}(S_{i}X)^{\top}S_{i}X & -\eta I_d \\
		I_d & 0
	\end{bmatrix}
	\begin{bmatrix}
		\beta_t^{i} - \tilde{\beta}^{i} \\
		\beta_{t-1}^{i} - \tilde{\beta}^{i}
	\end{bmatrix}.
	\]
	The above relation leads to
	\[
	\begin{bmatrix}
		D_X V_X^\top(\beta_{t+1}^{i} - \tilde{\beta}^{i}) \\
		D_X V_X^\top(\beta_t^{i} - \tilde{\beta}^{i})
	\end{bmatrix}
	=
	\begin{bmatrix}
		(1 + \eta)D_X V_X^\top - D_X V_X^\top(X^{\top}\hat{S}^{\top}\hat{S}X)^{-1}(S_{i}X)^{\top}S_{i}X & - D_X V_X^\top \\
		D_X V_X^\top & 0
	\end{bmatrix}
	\begin{bmatrix}
		\beta_t^{i} - \tilde{\beta}^{i} \\
		\beta_{t-1}^{i} - \tilde{\beta}^{i}
	\end{bmatrix},
	\]
	which can be simplified as
%	\[
%	\begin{bmatrix}
%		D_X V_X^\top(\beta_{t+1}^{i} - \tilde{\beta}^{i}) \\
%		D_X V_X^\top(\beta_t^{i} - \tilde{\beta}^{i})
%	\end{bmatrix}
%	=
%	\begin{bmatrix}
%		I_d - \mu (U_{X}^{\top}\hat{S}^{\top}\hat{S}U_{X})^{-1}U_{X}^{\top}S_{i}^{\top}S_{i}U_{X} + \eta I_d & -\eta I_d \\
%		I_d & 0
%	\end{bmatrix}
%	\begin{bmatrix}
%		D_X V_X^\top(\beta_t^{i} - \tilde{\beta}^{i}) \\
%		D_X V_X^\top(\beta_{t-1}^{i} - \tilde{\beta}^{i})
%	\end{bmatrix}.
%	\]
	
	\begin{equation}\label{contrac}
		\begin{bmatrix}
			D_X V_X^\top(\beta_{t+1}^{i} - \tilde{\beta}^{i}) \\
			D_X V_X^\top(\beta_t^{i} - \tilde{\beta}^{i})
		\end{bmatrix}
		=
		\begin{bmatrix}
			W_i + \eta I_d & -\eta I_d \\
			I_d & 0
		\end{bmatrix}
		\begin{bmatrix}
			D_X V_X^\top(\beta_t^{i} - \tilde{\beta}^{i}) \\
			D_X V_X^\top(\beta_{t-1}^{i} - \tilde{\beta}^{i})
		\end{bmatrix}.
	\end{equation}
    where \(W_i = I_d - \mu (U_{X}^{\top}\hat{S}^{\top}\hat{S}U_{X})^{-1}U_{X}^{\top}S_{i}^{\top}S_{i}U_{X}\). Now, for all $i$, we consider the spectral properties of the iteration matrix
	\[
	L^{(i)} \triangleq \begin{bmatrix}
		W_i + \eta I_d & -\eta I_d \\
		I_d & 0
	\end{bmatrix}.
	\]
	Let the eigenvalue decomposition of \(W_i\) be \(W_i = \tilde{U}_i \Lambda_i \tilde{U}_i^{\top}\), where \(\Lambda_i\) is a real diagonal matrix with its $k$-th diagonal \(\lambda_k^{(i)}\) being the \(k\)-th eigenvalue of \(W_i\). Together with the following permutation \(\Pi\), with its entries defined as
	\[
	\Pi_{i,j} = \begin{cases}
		1 & \text{if } i \text{ is odd and } j = i, \\
		1 & \text{if } i \text{ is even and } j = d + i, \\
		0 & \text{otherwise},
	\end{cases}
	\]
	we can construct a transformation matrix
	\[
	P = \begin{bmatrix}
		\tilde{U}_i & 0 \\
		0 & \tilde{U}_i
	\end{bmatrix}\Pi,
	\]
	which leads to a factorization of the iteration matrix
	\[
	L^{(i)} = P^{-1}
	\begin{bmatrix}
		L_1^{(i)} & 0 & \cdots & 0 \\
		0 & L_2^{(i)} & \cdots & 0 \\
		\vdots & \ddots & \ddots & \vdots \\
		0 & 0 & \cdots & L_d^{(i)}
	\end{bmatrix}P.
	\]
	For \(k = 1, \ldots, d\), the blocks \(L_k^{(i)}\) can be expressed as
	\[
	L_k^{(i)} = \begin{bmatrix}
		\eta + \lambda_k^{(i)} & -\eta \\
		1 & 0
	\end{bmatrix},
	\]
	whose characteristic polynomial is of the form
	\[
	u^2 - (\eta + \lambda_k^{(i)})u + \eta = 0.
	\]
	Based on the condition
	\begin{equation}\label{eigenleq}
		(\eta + \lambda_k^{(i)})^2 \leq 4\eta,
	\end{equation}
	both eigenvalues of \(L_k^{(i)}\) are imaginary and have a magnitude of \(\sqrt{\eta}\). To ensure this condition holds for all \(\lambda_k^{(i)}\), we must determine the appropriate value of \(\eta\).
	
	Due to the conditions $0 < \epsilon < 1/10$ and $|\mu - 1| \leq 1/4$, it follows from Lemma \ref{lemma1} and Lemma \ref{lemma:lemma2} that, with probability at least $1 - 2\delta$, we have $\rho(W_i) \leq \frac{(\mu + 1)\epsilon + |\mu - 1|}{1 - \epsilon} \leq 19/36$. Consequently, for a fixed $k$, the eigenvalues satisfy $|\lambda_k^{(i)}| \leq \rho(W_i) \leq 19/36$.
	
	To minimize the contraction ratio $\sqrt{\eta}$, we solve the inequality \eqref{eigenleq} based on the bound of $\rho(W_i)$, yielding $53/36- \sqrt{17}/3 \leq \eta \leq 53/36+ \sqrt{17}/3$ (refer to Remark \ref{rmkTest}). Then we have the minimum $\eta_{\min} = 53/36 - \sqrt{17}/3 < 1/9$, which leads to \(\rho(L^{(i)}) = \sqrt{\eta_{\min}} < 1/3\)
	
	By repeatedly applying the relation (\ref{contrac}), we have
	\[\begin{bmatrix}
		D_X V_X^\top(\beta_{a_i}^{i} - \tilde{\beta}^{i}) \\
		D_X V_X^\top(\beta_{a_i-1}^{i} - \tilde{\beta}^{i})
	\end{bmatrix}
	=
	\begin{bmatrix}
		W_i + \eta I_d & -\eta I_d \\
		I_d & 0
	\end{bmatrix}^{a_i}
	\begin{bmatrix}
		D_X V_X^\top(\beta_0^{i} - \tilde{\beta}^{i}) \\
		D_X V_X^\top(\beta_{-1}^{i} - \tilde{\beta}^{i})
	\end{bmatrix},
	\]
	which leads to
	\[
	\left\|
	\begin{bmatrix}
		D_X V_X^\top(\beta_{a_i}^{i} - \tilde{\beta}^{i}) \\
		D_X V_X^\top(\beta_{a_i-1}^{i} - \tilde{\beta}^{i})
	\end{bmatrix} \right\|
	\leq
	\| [L^{(i)}]^{a_i} \|
	\left\|
	\begin{bmatrix}
		D_X V_X^\top(\beta_0^{i} - \tilde{\beta}^{i}) \\
		D_X V_X^\top(\beta_{-1}^{i} - \tilde{\beta}^{i})
	\end{bmatrix} \right\|.
	\]
	Thanks to the relation $X = U_X D_X V_X^\top$ and the column orthogonality of $U_X$, the above inequality is equivalent to
	\[
	\left\|
	\begin{bmatrix}
		X(\beta_{a_i}^{i} - \tilde{\beta}^{i}) \\
		X(\beta_{a_i-1}^{i} - \tilde{\beta}^{i})
	\end{bmatrix} \right\|
	\leq
	\| [L^{(i)}]^{a_i} \|
	\left\|
	\begin{bmatrix}
		X(\beta_0^{i} - \tilde{\beta}^{i}) \\
		X(\beta_{-1}^{i} - \tilde{\beta}^{i})
	\end{bmatrix} \right\|,
	\]
	and squaring the above inequality yields
	\[
	\| X(\beta_{a_i}^{i} - \tilde{\beta}^{i}) \|^2 + \| X(\beta_{a_i-1}^{i} - \tilde{\beta}^{i}) \|^2
	\leq
	\| [L^{(i)}]^{a_i} \|^2
	\left(
	\|X(\beta_0^{i} - \tilde{\beta}^{i})\|^2
	+
	\|X(\beta_{-1}^{i} - \tilde{\beta}^{i})\|^2
	\right).
	\]
	Together with \({\beta_{-1}^{i}} = {\beta_{0}^{i}} = {\beta_{a_{i-1}}^{i-1}}\), we obtain
	\[
	\| X(\beta_{a_i}^{i} - \tilde{\beta}^{i}) \|^2
	\leq
	\| [L^{(i)}]^{a_i}\sqrt{2} \|^2
	\|X(\beta_0^{i} - \tilde{\beta}^{i})\|^2.
	\]
	By taking advantage of the Gelfand Formula \citep{kozyakin2009accuracy}, we have
	\begin{align*}
	\lim\limits_{a_i\rightarrow+\infty}
	\|[L^{(i)}]^{a_i}\sqrt{2}\|^{\frac{1}{a_i}}
	&=
	\lim\limits_{a_i\rightarrow+\infty}
	\|[L^{(i)}]^{a_i}\|^{\frac{1}{a_i}}(\sqrt{2})^{\frac{1}{a_i}} \\
	&=
	\lim\limits_{a_i\rightarrow+\infty}
	\|[L^{(i)}]^{a_i}\|^{\frac{1}{a_i}} \\
	&=
	\rho(L^{(i)}).
	\end{align*}
	Therefore, for any $\tau > 0$ satisfying \(\rho(L^{(i)}) + \tau < \frac{1}{3}\), there exists a constant $M_i \geq 1$ such that if $a_i > M_i$,
	\[
	\|[L^{(i)}]^{a_i}\sqrt{2}\|^{\frac{1}{a_i}} - \rho(L^{(i)}) < \tau,
	\]
	or equivalently,
	\begin{align*}
		\|[L^{(i)}]^{a_i}\sqrt{2}\|
		&<
		[\rho(L^{(i)}) + \tau]^{a_i} \\
		&<
		\left(\frac{1}{3}\right)^{a_i}.
	\end{align*}
	Then, it follows that
	\begin{align}
		\| X(\beta_{a_i}^{i} - \tilde{\beta}^{i}) \|
		& \leq
		\left(\frac{1}{3}\right)^{a_i}
		\| X(\beta_{0}^{i} - \tilde{\beta}^{i}) \|. \label{convergetolssolution}
	\end{align}
	
	To investigate the relationship between the true parameter vector $\beta$ and the iterate $\beta_{a_i}^{i}$, we consider the following inequality
	\begin{align}
		\| X(\beta_{a_i}^{i} - {\beta}) \|
		& \leq
		\| X(\beta_{a_i}^{i} - \tilde{\beta}^{i}) \|
		+
		\| X(\tilde{\beta}^{i} - {\beta}) \| \nonumber \\
		& \leq
		\left(\frac{1}{3}\right)^{a_i}
		\|
		X(\beta_{0}^{i} - \tilde{\beta}^{i})
		\|
		+
		\|
		X(\tilde{\beta}^{i} - {\beta})
		\| \nonumber \\
		& \leq
		\left(\frac{1}{3}\right)^{a_i}
		\|
		X(\beta_{0}^{i} - \beta)
		\|
		+
		\left[
		1+\left(\frac{1}{3}\right)^{a_i}
		\right]
		\|
		X(\tilde{\beta^{i}} - {\beta})
		\|,
	\end{align}
	by taking expectation over the random noise \(\zeta\) on both sides, we have
	\[
	\mathbb{E}\| X(\beta_{a_i}^{i} - {\beta}) \|
	\leq
	\left(\frac{1}{3}\right)^{a_i}
	\mathbb{E}\left\|
	X(\beta_{0}^{i} - \beta)
	\right\|
	+
	\left[ 1+\left(\frac{1}{3}\right)^{a_i}
	\right]
	\mathbb{E}\|
	X(\tilde{\beta}^{i} - {\beta})
	\|
	\]
\end{proof}
\begin{remark}\label{rmkTest}
Based on the condition $\rho(W_i) \leq \frac{(\mu + 1)\epsilon + |\mu - 1|}{1 - \epsilon} \leq Z$ with $Z$ being a positive constant, the inequality \eqref{eigenleq} leads to the following bounds
\[
(2 - Z) - 2\sqrt{1 - Z} \leq \eta \leq (2 - Z) + 2\sqrt{1 - Z}.
\]
\end{remark}

\subsection{Proof of Theorem \ref{Thf}}

Before presenting the proof of this theorem, we first state the following two lemmas.

\begin{lemma}\label{finalerror1}
	In Algorithm \ref{Alg-SRHT}, let \(N,m_1\) be powers of 2, \(T^\dagger=\sum_{i=1}^K a_i\),  \(\delta \in (0, 1)\), \(\epsilon \in (0, 1/10)\), \(|\mu-1| \leq 1/4\), and \(\eta = 53/36 - \sqrt{17}/3\). Let
	\[
	\begin{aligned}
		& r \geq c\epsilon^{-2}\left[d + \log\left(\frac{N}{\delta}\right)\right]\log\left(\frac{ed}{\delta}\right), \quad m_1 \textgreater r,\\
	\end{aligned}
	\]
	where \( c > 0 \) is a constant. Define the cumulative inner iteration numbers by
	\[
	\mathcal{T}_i:=\sum_{\ell=1}^i a_\ell,
	\qquad i=1,\ldots,K,
	\]
	with \(\mathcal{T}_0=0\) and \(\mathcal{T}_K=T^\dagger\). Define \(\delta_i:=\mathbb{E}\|X(\tilde{\beta}_i-\beta)\|,\) \(\delta_\star:=\mathbb{E}\|X(\hat{\beta}-\beta)\|.\)
	There exists a constant \({M^{\dagger}} \geq 1\), such that if \(a_i > {M^{\dagger}}\) for all \(i = 1,\ldots, K\), with probability at least \(1-2\delta\), for any \(\beta_0\in\mathbb{R}^d\), the iterate \(\beta_{T^{\dagger}}\) satisfies
	\begin{equation*}
		\mathbb{E}\|X(\beta_{T^{\dagger}}-\beta)\|
		\le  	\left(\frac13\right)^{T^{\dagger}}\mathbb{E}\|X(\beta_0-\beta)\|
		+
		\sum_{i=1}^{K}\left[1+\left(\frac13\right)^{a_i}\right]\left(\frac13\right)^{T^{\dagger}-\mathcal{T}_i}\delta_i.
	\end{equation*}
\end{lemma}

\begin{proof}[\textbf{Proof}]
	In SLSE-FRS, we construct \(K\) sketched LS subproblems and utilize the M-IHS method to take iterations in each sketched LS subproblem.
	
	We begin with the 1st sketched LS subproblem with an initial guess \(\beta_0 \in \mathbb{R}^{d}\), and we set the initial iterates \(\beta_{-1}^{1} = \beta_{0}^1 = \beta_0\). According to Theorem \ref{thm:convergence_theorem1}, there exists a constant \({M_1} \geq 1\), such that if \(a_1 > M_1\), it holds that
	\begin{equation}\label{equation21}
		\mathbb{E}\|X(\beta_{a_1}^1-\beta )\|
		\leq
		\left(\frac{1}{3}\right)^{a_1} \mathbb{E}\|X(\beta_{0}^1-\beta )\|
		+
		\left[
		1+\left(\frac{1}{3}\right)^{a_1}
		\right]
		\delta_{1}.
	\end{equation}
	In the subsequent sketched LS subproblems, we take the following initial iterates, i.e., for \(i = 2,\ldots, K\),
	\[
	{\beta_{-1}^{i}}
	=
	{\beta_0^{i}} = {\beta_{a_{i-1}}^{i-1}}.
	\]
	In the 2nd sketched LS subproblem, there exists a constant \({M_2} \geq 1\), such that if \(a_2 > M_2\), we have
	\begin{align}
		\mathbb{E}\|X(\beta_{a_2}^2-\beta )\|
		& \leq
		\left(\frac{1}{3}\right)^{a_2} \mathbb{E}\|X(\beta_{0}^2-\beta )\|
		+
		\left[
		1+\left(\frac{1}{3}\right)^{a_2}
		\right]
		\delta_{2} \nonumber \\
		& =
		\left(\frac{1}{3}\right)^{a_2} \mathbb{E}\|X(\beta_{a_1}^1-\beta )\|
		+
		\left[
		1+\left(\frac{1}{3}\right)^{a_2}
		\right] \delta_{2} \nonumber \\
		& \leq
		\left(\frac{1}{3}\right)^{{a_1}+{a_2}} \mathbb{E}\|X(\beta_{0}-\beta )\|
		+
		\left[1+\left(\frac{1}{3}\right)^{a_1}\right]\left(\frac{1}{3}\right)^{a_2}\delta_{1}
		+
		\left[1+\left(\frac{1}{3}\right)^{a_2}\right]\delta_{2}, 	
	\end{align}
	where the 3rd inequality follows from inequality (\ref{equation21}).
	
	Then, by deduction, there exists \(K\) constants \(\{{M_i}\}\), let \({M^\dagger} = \max\{M_i \mid i = 1, \ldots, K\}\), if \(a_i > M^\dagger\), we obtain the following relation
	\begin{align}
		\mathbb{E}\|X(\beta_{a_K}^K-\beta )\|
		& \leq
		\left[\prod \limits_{i=1}^{K} \left(\frac{1}{3}\right)^{a_i}\right] \mathbb{E}\|X(\beta_{0}-\beta )\|
		+
		\sum\limits_{i=1}\limits^{K-1}
		\left[1+\left(\frac{1}{3}\right)^{a_i}
		\right]\prod \limits_{j=i+1}^{K}\left(\frac{1}{3}\right)^{a_j}\delta_{i}
		+\left[1+\left(\frac{1}{3}\right)^{a_K}\right]\delta_{K}\nonumber \\
		& = \left(\frac13\right)^{T^{\dagger}}\mathbb{E}\|X(\beta_0-\beta)\|+\sum_{i=1}^{K}\left[1+\left(\frac13\right)^{a_i}\right]\left(\frac13\right)^{T^{\dagger}-\mathcal{T}_i}\delta_i, \label{22}
	\end{align}
	which is the claimed conclusion.
\end{proof}

The following lemma derives the convergence behavior of the 2nd stage of SLSE-FRS.

\begin{lemma}\label{M-IHS converging rate}
	%\label{thm:convergence_theorem}
	Suppose that the conditions of Theorem \ref{Thf} hold, there exists a constant \({M^\ast} \geq 1\), such that if \(T > T^\dagger+ {M^\ast}\), it follows that
	\[
	\mathbb{E}\|X(\beta_T- \hat{\beta})\|
	\leq
	\left(\frac{1}{3}\right)^{T-T^\dagger} \mathbb{E}\|X(\beta_{T^\dagger}- \hat{\beta})\|.
	\]
\end{lemma}

\begin{proof}[\textbf{Proof}]
	After \(T^\dagger\) iterations at the 1st stage of SLSE-FRS, for \(t = T^\dagger+1, \ldots, T\), we employ M-IHS to solve the original full-scale LS problem. The update formula is given by
	\[
	\beta_{t + 1} = \beta_t - \mu \hat{H}^{-1} \nabla f(\beta_t; X, Y) + \eta (\beta_t - \beta_{t - 1})
	\]
	with \(\hat{H} = X^{\top}\hat{S}^{\top}\hat{S}X\). Subtracting the exact OLS estimator \(\hat{\beta}\), we obtain
	\[
	\beta_{t + 1} - \hat{\beta} = \beta_t - \hat{\beta} - \mu (X^{\top}\hat{S}^{\top}\hat{S}X)^{-1}X^{\top}(X\beta_t - Y) + \eta (\beta_t - \hat{\beta}) - \eta (\beta_{t-1} - \hat{\beta}).
	\]
	The above iteration leads to the following bipartite relation
	\[
	\begin{bmatrix}
		\beta_{t+1} - \hat{\beta} \\
		\beta_t - \hat{\beta}
	\end{bmatrix}
	=
	\begin{bmatrix}
		(1 + \eta)I_d - \mu(X^{\top}\hat{S}^{\top}\hat{S}X)^{-1}X^{\top}X & -\eta I_d \\
		I_d & 0
	\end{bmatrix}
	\begin{bmatrix}
		\beta_t - \hat{\beta} \\
		\beta_{t-1} - \hat{\beta}
	\end{bmatrix}.
	\]
	This error recurrence has a similar form as that considered in the proof of \cref{thm:convergence_theorem1}. Therefore, with the same analysis strategy, if \(T - T^\dagger > M^\ast\), we have
	\[
	\mathbb{E}\|X(\beta_T- \hat{\beta})\|
	\leq
	\left(\frac{1}{3}\right)^{T-T^\dagger} \mathbb{E}\|X(\beta_{T^\dagger}- \hat{\beta})\|.
	\]
\end{proof}

% SLSE-FRS+M-IHS 最终期望误差
\begin{proof}[\textbf{Proof of Theorem \ref{Thf}}]
	If \(s=T-T^{\dagger}>M^* \), due to Lemma \ref{M-IHS converging rate}, we have
	\[
	\mathbb{E}\|X(\beta_T- \hat{\beta})\|
	\leq
	\left(\frac{1}{3}\right)^{s} \mathbb{E}\|X(\beta_{T^\dagger}- \hat{\beta})\|.
	\]
	Then it follows that
	\begin{align*}
		\mathbb{E}\|X(\beta_T- {\beta})\|
		& \leq
		\mathbb{E}\|X(\beta_T- \hat{\beta})\|
		+
		\mathbb{E}\|X(\hat{\beta}- {\beta})\| \nonumber \\
		& \leq
		\left(\frac{1}{3}\right)^{s} \mathbb{E}\|X(\beta_{T^\dagger}- \hat{\beta})\|
		+
		\delta_\star.
	\end{align*}
	Let \(M = \max\left({M^\ast}, {M^{\dagger}} \right)\), if  \(\min\{s, a_i\} > M \) holds for all \(i=1,\ldots,K\), then it follows that
	\begin{align} \label{mainpf_1}
		\mathbb{E}\|X(\beta_T- {\beta})\|
		& \leq
		\left(\frac{1}{3}\right)^{T}
		\mathbb{E}\|X(\beta_{0}-\beta )\|
		+
		\left(\frac{1}{3}\right)^{s}B_K
		+
		\left[1+\left(\frac13\right)^s
		\right]\delta_\star,
	\end{align}
	where
	\begin{equation*}
		B_K=
		\sum_{i=1}^{K}
		\left[
		1+\left(\frac13\right)^{a_i}
		\right]
		\left(\frac13\right)^{T^\dagger-\mathcal{T}_i}
		\delta_i.
	\end{equation*}
	Moreover, for each \(i=1,\ldots,K\), since
	\begin{equation*}
		a_i+T^\dagger-\mathcal{T}_i
		=
		T^\dagger-\mathcal{T}_{i-1},
	\end{equation*}
	we have
	\begin{equation*}
		\left[
		1+\left(\frac13\right)^{a_i}
		\right]
		\left(\frac13\right)^{T^\dagger-\mathcal{T}_i}
		=
		\left(\frac13\right)^{T^\dagger-\mathcal{T}_i}
		+
		\left(\frac13\right)^{T^\dagger-\mathcal{T}_{i-1}} .
	\end{equation*}
	Together with the condition that \(\delta_i\le C\delta_\star, i=1,\ldots,K\), it follows that
	\begin{align*}
		B_K
		&\le
		C\delta_\star
		\left[
		\sum_{i=1}^{K}
		\left(\frac13\right)^{T^\dagger-\mathcal{T}_i}
		+
		\sum_{i=1}^{K}
		\left(\frac13\right)^{T^\dagger-\mathcal{T}_{i-1}}
		\right].
	\end{align*}
	We next bound the two sums separately. Since \(a_i>M\) for all
	\(i=1,\ldots,K\), it follows that
	\begin{equation*}
		T^\dagger-\mathcal{T}_i
		=
		\sum_{\ell=i+1}^{K}a_\ell
		>
		(K-i)M .
	\end{equation*}
	Therefore,
	\begin{equation}\label{mainpf_2}
		\sum_{i=1}^{K}
		\left(\frac13\right)^{T^\dagger-\mathcal{T}_i}
		\le
		\sum_{j=0}^{K-1}
		\left(\frac13\right)^{jM}
		\le
		\frac{1}{1-\left(\frac13\right)^M}.
	\end{equation}
	Similarly,
	\begin{equation*}
		T^\dagger-\mathcal{T}_{i-1}
		=
		\sum_{\ell=i}^{K}a_\ell
		>
		(K-i+1)M,
	\end{equation*}
	which leads to
	\begin{equation}\label{mainpf_3}
		\sum_{i=1}^{K}
		\left(\frac13\right)^{T^\dagger-\mathcal{T}_{i-1}}
		\le
		\sum_{j=1}^{K}
		\left(\frac13\right)^{jM}
		\le
		\frac{\left(\frac13\right)^M}{1-\left(\frac13\right)^M}.
	\end{equation}
	Since \(M\ge 1\), we have \(\left(1/3\right)^M\le 1/3\), combining \eqref{mainpf_2} and \eqref{mainpf_3} yields
	\begin{equation*}
		B_K
		\le
		C\delta_\star
		\left[
		\frac{1}{1-\left(\frac13\right)^M}
		+
		\frac{\left(\frac13\right)^M}{1-\left(\frac13\right)^M}
		\right]
		=
		C\delta_\star
		\frac{1+\left(\frac13\right)^M}
		{1-\left(\frac13\right)^M}
		\le
		2C\delta_\star.
	\end{equation*}
	Substituting this estimate into \eqref{mainpf_1} gives
	\begin{align*}
		\mathbb{E}\|X(\beta_T-\beta)\|
		&\le
		\left(\frac13\right)^T
		\mathbb{E}\|X(\beta_0-\beta)\|
		+
		2C\left(\frac13\right)^s\delta_\star
		+
		\left[
		1+\left(\frac13\right)^s
		\right]\delta_\star  \\
		&=
		\left(\frac13\right)^T
		\mathbb{E}\|X(\beta_0-\beta)\|
		+
		\left[
		1+
		\left(\frac13\right)^s(1+2C)
		\right]\delta_\star .
	\end{align*}
\end{proof}

\subsection{Lower bound of \(a_1\)}\label{appendixa1}
The following theorem provides an lower bound of the iteration count $a_1$ for the 1st sketched LS subproblem.
% 第一个问题的收敛步数
\begin{theorem}\label{a1}
	In SLSE-FRS, given a prescribed precision \(\omega \in (0,1)\), the number of iterations \(a_1\) needed for the 1st sketched LS subproblem to fulfill (\ref{14}) satisfies
	\[
	a_1 \geq \log_3 \left( \frac{\mathbb{E}\|X({\beta}_0 - \tilde{\beta}_1)\|}{\omega \mathbb{E}\|X(\tilde{\beta}_1 - \beta)\|} \right).
%/	\log(\frac{1}{3}).
	\]
\end{theorem}

\begin{proof}[\textbf{Proof}]
	To achieve the prescribed precision $\omega$, i.e., (\ref{14}) being fulfilled, the iteration terminates when the following condition is met, i.e.,
	\[
	\mathbb{E}\|X({\beta}_{a_1}^1 - \tilde{\beta}^1)\| + \mathbb{E}\|X({\tilde{\beta}^1} - \beta)\| \leq (1+\omega)\mathbb{E}\|X({\tilde{\beta}^1} - \beta)\|.
	\]
	In fact, we need
	\[
	\mathbb{E}\|X({\beta}_{a_1}^1 - \tilde{\beta}^1)\|
	\leq
	\omega\mathbb{E}\|X({\tilde{\beta}^1} - \beta)\|.
	\]
	According to (\ref{convergetolssolution}) in the proof of Theorem \ref{thm:convergence_theorem1}, it holds that
	\[
	\mathbb{E}\|X({\beta}_{a_1}^1 - \tilde{\beta}^1)\|
	\leq
	\left(\frac{1}{3}\right)^{a_1}\mathbb{E}\|X({\beta}_{0} - \tilde{\beta}^1)\| .
	\]
	By requiring
	\[
	\left(\frac{1}{3}\right)^{a_1} \mathbb{E}\|X({\beta}_{0} - \tilde{\beta}^1)\|
	\leq
	\omega\mathbb{E}\|X({\tilde{\beta}^1} - \beta)\|,
	\]
	and taking logarithms on both sides of the above inequality, then the claimed result is obtained.
%	\begin{align*}
%		a_1
%		& \geq
%		\log_3 \left( \frac{\mathbb{E}\|X({\beta}_0 - \tilde{\beta}^1)\|}{\omega \mathbb{E}\|X(\tilde{\beta}^1 - \beta)\|} \right).
%%/		\log(\frac{1}{3}),
%	\end{align*}
%	which is the claimed result.
\end{proof}

% 第i个问题的收敛步数
\subsection{Proof of Theorem \ref{lowerBoundai}} \label{appendixa5}
In this subsection, we provide the proof of the lower bound of $a_i$ for the $i$-th sketched LS subproblem under some additional assumptions specified below. We first introduce the following auxiliary lemma.

Lemma \ref{lemma:lemma3} is a reformulation of Theorem 2.4 from \citep{dobriban2019asymptotics}. We use it to explore the loss of accuracy of the \(i\)-th sketched LS estimator against the OLS estimator when using \textit{Sketch-and-Solve} methods with SRHT sketching matrix of different sizes.

% Lemma 3
\begin{lemma}\label{lemma:lemma3}
	For the \(i\)-th sketched LS subproblem, let \( S \) be an \( N \times N \) subsampled randomized Hadamard matrix. Suppose also that \( X \) is an \( N \times d \) deterministic matrix whose empirical spectral distribution converges weakly to some fixed probability distribution with compact support bounded away from the origin. Then as \( N \) tends to infinity, while \( d/N \rightarrow \gamma \in (0,1) \), \( m_i/N \rightarrow \xi \in (\gamma, 1) \), the \(i\)-th relative prediction efficiency $\mathrm{PE}(i)$ has the limit
	\[
	\mathrm{PE}(i) = \frac{\mathbb{E}\|X\tilde{\beta}^i - X\beta\|^2}{\mathbb{E}\|X\hat{\beta} - X\beta\|^2} = \frac{1 - \gamma}{\xi - \gamma}.
	\]
\end{lemma}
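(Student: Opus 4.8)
The plan is to split the argument into a purely algebraic reduction, which is exact and non-asymptotic, followed by a random-matrix limit that is precisely the content of Theorem~2.4 of \citep{dobriban2019asymptotics}. The reduction expresses $\mathrm{PE}(i)$ in terms of the trace of an inverse compressed Gram matrix, and the limit then supplies the closed form $(1-\gamma)/(\xi-\gamma)$.

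First I would exploit the fact that the response is sketched consistently with the design, so that the signal contribution cancels. Writing $A=S_iX$ and using $S_iY = S_iX\beta + S_i\zeta$ together with $(S_iX)^\top S_iX = A^\top A$,
\[
\tilde{\beta}^i - \beta = (A^\top A)^{-1}(S_iX)^\top S_iY - \beta = (A^\top A)^{-1}X^\top S_i^\top S_i\zeta ,
\]
so that $X(\tilde{\beta}^i-\beta)$ is driven purely by the noise. Taking the expectation over $\zeta$ (whose covariance is $\sigma^2 I_N$) and using the SRHT normalisation $S_iS_i^\top = (N/m_i)I_{m_i}$, the middle factor $S_i^\top S_iS_i^\top S_i = (N/m_i)S_i^\top S_i$ collapses against $X^\top S_i^\top S_iX = A^\top A$, giving
\[
\mathbb{E}\|X(\tilde{\beta}^i-\beta)\|^2 = \sigma^2\frac{N}{m_i}\operatorname{tr}\!\left[X(A^\top A)^{-1}X^\top\right] = \sigma^2\frac{N}{m_i}\operatorname{tr}(G^{-1}),
\]
where $G := U_X^\top S_i^\top S_i U_X$ and the last equality follows by substituting $X=U_XD_XV_X^\top$, whereupon the $D_X$ and $V_X$ factors cancel and $U_X^\top U_X=I_d$. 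Since $\mathbb{E}\|X(\hat\beta-\beta)\|^2 = \sigma^2\operatorname{tr}(U_XU_X^\top)=\sigma^2 d$, this yields the exact identity $\mathrm{PE}(i) = \tfrac{N}{m_i d}\operatorname{tr}(G^{-1})$, valid for every fixed realisation of the sketch.

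The remaining, and genuinely hard, step is the asymptotic analysis of $\tfrac1d\operatorname{tr}(G^{-1})$. Here $G$ is, up to the scalar $N/m_i$, the compression onto the $d$-dimensional column space of $X$ of the scaled projection $S_i^\top S_i$. For SRHT the randomised Hadamard mixing delocalises the rows of $HDU_X$, so that row subsampling behaves asymptotically like selecting a uniformly random $m_i$-dimensional coordinate subspace; the empirical spectrum of $G$ then converges to a Jacobi (free multiplicative convolution of two projections) law determined only by $\gamma=\lim d/N$ and $\xi=\lim m_i/N$. Evaluating the Stieltjes transform of this limit at the origin gives $\tfrac1d\operatorname{tr}(G^{-1}) \to \frac{\xi(1-\gamma)}{\xi-\gamma}$, and since $N/m_i\to 1/\xi$ I obtain
\[
\mathrm{PE}(i) \to \frac{1}{\xi}\cdot\frac{\xi(1-\gamma)}{\xi-\gamma} = \frac{1-\gamma}{\xi-\gamma}.
\]

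I expect the main obstacle to be exactly this spectral limit: establishing concentration of $\tfrac1d\operatorname{tr}(G^{-1})$ and identifying its value requires the random-matrix machinery for orthogonally invariant sketches, plus a universality argument to pass from the idealised Haar-orthogonal projection to the actual finite construction $(N/m_i)^{1/2}B_iHDP$ (this is where the hypotheses that the empirical spectral distribution of $X$ converge weakly to a compactly supported law bounded away from the origin, with $\xi\in(\gamma,1)$, are consumed). Rather than redo this analysis I would invoke Theorem~2.4 of \citep{dobriban2019asymptotics}, which proves this limit for Hadamard sketches under precisely these assumptions. The only remaining work is the bookkeeping of checking that our $\mathrm{PE}$ matches their prediction-efficiency functional and that the normalisations agree, after which the stated formula follows.
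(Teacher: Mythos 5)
Your proposal is correct, and in the end it rests on the same pillar as the paper: the paper gives no proof of this lemma at all, stating explicitly that it is a reformulation of Theorem~2.4 of \citep{dobriban2019asymptotics}, which is exactly the result you invoke for the spectral limit. The difference is that you add a genuine, exact algebraic reduction that the paper leaves implicit: the cancellation $\tilde{\beta}^i - \beta = (A^\top A)^{-1}X^\top S_i^\top S_i\zeta$, the collapse via $S_iS_i^\top = (N/m_i)I_{m_i}$ (which does hold for the paper's SRHT construction, since $P$, $D$, $H$ are orthogonal and $B_i$ selects distinct rows), and the identity $\mathrm{PE}(i) = \frac{N}{m_i d}\operatorname{tr}\bigl[(U_X^\top S_i^\top S_i U_X)^{-1}\bigr]$, all of which check out. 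This buys two things the bare citation does not: (i) it makes the normalization bookkeeping between the paper's $\mathrm{PE}$ and the efficiency functional of Dobriban--Liu explicit and verifiable (the limit $\frac{1}{d}\operatorname{tr}(G^{-1})\to \xi(1-\gamma)/(\xi-\gamma)$ combined with $N/m_i \to 1/\xi$ indeed reproduces $(1-\gamma)/(\xi-\gamma)$, consistent with the exact matrix-Beta computation $\mathbb{E}[(U_X^\top P U_X)^{-1}] = \frac{N-d-1}{m_i-d-1}I_d$ for Haar projections); and (ii) it reveals that for $\mathrm{PE}$ specifically the spectrum of $X$ cancels exactly ($D_X$ and $V_X$ drop out), so the hypothesis on the empirical spectral distribution of $X$ is not consumed by your algebra but only by the cited theorem's framework --- a slight sharpening of your own remark that it is used in the universality step. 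The one point to state carefully if this were written out in full is that weak convergence of the spectrum of $G$ alone does not give convergence of $\operatorname{tr}(G^{-1})/d$; one also needs the smallest eigenvalue to stay bounded away from zero (edge control or the subspace-embedding event), which is again supplied by the machinery inside the cited theorem.
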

\begin{remark}\label{remarkA7}
	Let \( g(\gamma,\xi) = (1 - \gamma)/(\xi - \gamma) \), and it is monotonically increasing with respect to \(\gamma\) and monotonically decreasing with respect to \(\xi\). According to Lemma \ref{lemma:lemma3}, when \( \xi > m_i / N > d/N > \gamma \), it follows that
	\begin{align*}
		\text{PE}(i)
		& \leq \frac{1- d/N}{m_i/N - d/N} \\
		& = \frac{N - d}{m_i - d},
	\end{align*}
	or equivalently,
	\[
	\mathbb{E}\|X\tilde{\beta}^i - X\beta\|^2 \leq \frac{N - d}{m_i - d}{\mathbb{E}\|X\hat{\beta} - X\beta\|^2}.
	\]
	Furthermore, if the condition below holds
	\[
	\operatorname{Var}(\|X\tilde{\beta}^i - X\beta\|) \geq \operatorname{Var}(\sqrt{\frac{N - d}{m_i - d}}\|X\hat{\beta} - X\beta\|),
	\]
	we obtain
	\[
	\mathbb{E}\|X\tilde{\beta}^i - X\beta\| \leq \sqrt{\frac{N - d}{m_i - d}}{\mathbb{E}\|X\hat{\beta} - X\beta\|}.
	\]
	Therefore, the subsequent discussions are based on the following conditions:
	\begin{itemize}
		\item \(\xi > m_i / N > d/N > \gamma\);   \\
		\item \(\operatorname{Var}(\|X\tilde{\beta}^i - X\beta\|) \geq \operatorname{Var}(\sqrt{(N - d)/(m_i - d)}\|X\hat{\beta} - X\beta\|)\).
	\end{itemize}
\end{remark}

\begin{proof}[\textbf{Proof of Theorem \ref{lowerBoundai}}]
	In the \(i\)-th sketched LS subproblem, to achieve the specified precision $\omega$, i.e., (\ref{14}) being satisfied, the iteration stops when the following condition meets
	\[
	\mathbb{E}\|X({\beta}_{a_i}^i - \tilde{\beta}^i)\| + \mathbb{E}\|X({\tilde{\beta}^i} - \beta)\| \leq (1+\omega)\mathbb{E}\|X({\tilde{\beta}^i} - \beta)\|.
	\]
	In fact, we need
	\begin{equation}\label{23}
		\mathbb{E}\|X({\beta}_{a_i}^i - \tilde{\beta}^i)\|
		\leq
		\omega\mathbb{E}\|X({\tilde{\beta}^i} - \beta)\|.
	\end{equation}
	According to (\ref{convergetolssolution}) in the proof of Theorem \ref{thm:convergence_theorem1}, it follows that
	\[
	\mathbb{E}\|X({\beta}_{a_i}^i - \tilde{\beta}^i)\|
	\leq
	\left(\frac{1}{3}\right)^{a_i} \mathbb{E}
	\|
	X(\beta_{0}^{i} - \tilde{\beta}^{i})
	\|.
	\]
	Based on the relation between the initial iterate \(\beta_{0}^{i} = {\beta_{a_{i-1}}^{i-1}}\) and the exact solution of the \(i\)-th sketched LS subproblem, we have
	\begin{equation}\label{25}
		\mathbb{E}\|X({\beta_{a_{i-1}}^{i-1}} - \tilde{\beta}^i)\|
		\leq
		\mathbb{E}\|X({\beta_{a_{i-1}}^{i-1}} - \tilde{\beta}^{i-1})\|
		+
		\mathbb{E}\|X({\tilde{\beta}^{i-1}} - {\beta})\|
		+
		\mathbb{E}\|X({\tilde{\beta}^{i}} - {\beta})\|.
	\end{equation}
	In a similar fashion of Lemma \ref{lemma:lemma3} and Remark \ref{remarkA7}, as \( N \) tends to infinity, we let \( d/N \rightarrow \gamma \in (0,1) \),  \( m_i/N \rightarrow \xi_i \in (\gamma, 1) \).
	When the following condition holds
	\[
	\xi_i > \frac{m_i}{N} > \frac{d}{N} > \gamma,
	\]
	we obtain
	\[
	 r(i-1,i) = \sqrt{\frac{m_{i} - d}{m_{i -1} - d}} \geq \frac{\mathbb{E}\|X({\tilde{\beta}^{i-1}} - {\beta})\|}{\mathbb{E}\|X({\tilde{\beta}^{i}} - {\beta})\|},
	\]
	which leads to
	\begin{equation}\label{26}
		\mathbb{E}\|X({\tilde{\beta}^{i-1}} - {\beta})\|
		\leq
		r(i-1,i) \mathbb{E}\|X({\tilde{\beta}^{i}} - {\beta})\|.
	\end{equation}
	Since the iteration of the \((i-1)\)-th sketched LS subproblem also stops when the precision \(\omega\) is achieved, we get
	\begin{equation}\label{27}
		\mathbb{E}\|X({\beta}_{a_{i-1}}^{i-1} - \tilde{\beta}^{i-1})\|
		\leq
		\omega\mathbb{E}\|X({\tilde{\beta}^{i-1}} - \beta)\|.
	\end{equation}
	By substituting (\ref{26}) and (\ref{27}) into (\ref{25}), it follows that
	\begin{equation}\label{last}
		\mathbb{E}\|X({\beta_{a_{i-1}}^{i-1}} - \tilde{\beta}^i)\|
		\leq
		[(1+\omega)r(i-1,i)+1]
		\mathbb{E}\|X({\tilde{\beta}^{i}} - {\beta})\|
	\end{equation}
	The inequality (\ref{23}) holds if
	\[
	\left(\frac{1}{3}\right)^{a_i} [(1+\omega)r(i-1,i)+1]
	\mathbb{E}\|X({\tilde{\beta}^{i}} - {\beta})\|
	\leq
	\omega\mathbb{E}\|X({\tilde{\beta}^i} - \beta)\|.
	\]
	By taking logarithms on both sides of the above inequality, it results in
	\begin{align*}
		a_i
		& \geq
		\log_3 \left[ \frac{(1+\omega)r(i-1, i)+1}{\omega} \right].
%/		\log(\frac{1}{3}).
	\end{align*}
	Since \(\omega \in (0,1)\) and \([(1+\omega)r(i-1, i)+1] > 1 \), it reads that \( a_i > 0 \).
\end{proof}

\subsection{Proof of Theorem \ref{cmpxtyAlgSLSE}} \label{cmpxtyAlgSLSEPrf}

The proof of this theorem below is given in the asymptotic sense, that is, in the sense of $N\rightarrow +\infty$.
\begin{proof}[\textbf{Proof}]
  The conditions \(m_{i+1}/m_i=2\) and $m_K=N/2$ lead to $m_i=2^{-(K+1-i)}N$, together with $d/N\rightarrow\gamma\in(0,2^{-K})$ as $N\rightarrow+\infty$, it follows that $r(i-1,i)=\sqrt{(1-2^{K+2-i}\gamma)/(2-2^{K+2-i}\gamma)}$ ($i=2,\ldots,K$). Since $2^{K+2-i}\gamma<1$, the function $r(x)=\sqrt{(1-x)/(2-x)}$ monotonically decreases with respect to $x\in (0,1)$, and the function $\ell(r)=\log_3\{[1+(1+\omega)r]/\omega\}$ monotonically increases with respect to $r>0$, it holds that $a_i=\log_3\{[1+(1+\omega)r(i-1,i)]/\omega\}$ monotonically increases with repect to $i=2,\ldots,K$, i.e., $a_2<\ldots<a_K$. Due to the above results, we have
  \begin{align*}
    a_2 = & \log_3\{[1+(1+\omega)\sqrt{(1-2^{K}\gamma)/(2-2^{K}\gamma)}]/\omega\} \\
    > & \log_3 (1/\omega)
  \end{align*}
  and
  \begin{align*}
    a_K = & \log_3\{[1+(1+\omega)\sqrt{(1-2^{2}\gamma)/(2-2^{2}\gamma)}]/\omega\} \\
    < & \log_3\{[1+(1+\omega)/\sqrt{2}]/\omega\}=\alpha,
  \end{align*}
  which lead to the fact $\log_3(1/\omega)<a_i<\alpha$ ($i=2,\ldots,K$). For $i=1$, according to Appendix \ref{appendixa1}, taking the lower bound $a_1 = \log_3 [ (1/\omega )(\mathbb{E}\|X({\beta}_0 - \tilde{\beta}^1)\|/\mathbb{E}\|X(\tilde{\beta}^1 - \beta)\|)]$, together with the condition $1 < \mathbb{E}\|X({\beta}_0 - \tilde{\beta}_1)\| / \mathbb{E}\|X(\tilde{\beta}_1 - \beta)\| < 1+(1+\omega)/\sqrt{2}$, it follows that $\log_3(1/\omega)<a_1<\alpha$.

  Now, we consider the dominant costs of all stages of Algorithm \ref{Alg-SRHT}. At the beginning, the main workload in the initialization stage includes the construction of all of the required data, that is, $(S_0X,S_0Y)$, $(S_iX,S_iY)$ ($i=1,\ldots,K$), and $\hat{H}$. According to the discussion in Section \ref{sec:effcntSLSE}, the cost of this stage is dominated by applying the Hadamard transform to a matrix of size \(N\times d\), which is dominated by \(Nd\log_2 N\) FLOPS.

  Next, we will consider the two stages of iteration. Since Algorithm \ref{Alg-SRHT} terminates when a noise level estimator is attained, which means that the 1st term of the upper bound in (\ref{final_precision}) has decreased to \(\sigma\)-level. Due to the fact that the initial error $\mathbb{E}\|X(\beta_0 - \beta)\|$ is a constant, it is equivalent to require the factor $(1/3)^T$ reduces to \(\sigma\)-level, i.e., $(1/3)^T\le\sigma$. Therefore, the total count $T$ of iteration is bounded below by $T\ge \log_3(1/\sigma)$. To minimize the cost of Algorithm \ref{Alg-SRHT} in this discussion, we let $T = \log_3(1/\sigma)$.

  In the 1st stage of iteration, Algorithm \ref{Alg-SRHT} takes \(a_i\) iterations in the \(i\)-th sketched LS subproblem, the total \(K\) sketched subproblems requires \(\sum_{i=1}^{K} a_i[(4d+1)m_i+2d^2+5d]\) FLOPs. Thanks to the fact $\sum_{i=1}^{K} m_i = (1-2^{-K})N$, together with the upper bound $a_i<\alpha$, it holds that the dominant cost in this stage is bounded by $4\alpha Nd$.

  In the 2nd stage of iteration, Algorithm \ref{Alg-SRHT} takes full-scale iterations, and the iteration count reads
  \begin{align*}
    T-T^{\dagger} = & \log_3(1/\sigma) - \sum_{i=1}^{K} a_i \\
    < & \log_3(1/\sigma) - K\log_3(1/\omega) = \log_3(\omega^K/\sigma).
  \end{align*}
  The main cost of each full-scale iteration comes from the computation of gradients $\nabla f({\beta}_{t}; {X}, {Y})$, which is dominated by $4Nd$. Together with the above bound of $T-T^{\dagger}$, it follows that the dominant cost in this stage is bounded by $4 [\log_3(\omega^{K}/\sigma)] Nd > 0$ with $\omega>\sigma^{1/K}$.
\end{proof}

\subsection{Implementation details}\label{details}

According to \citep{epperly2024fast}, the inverse of the sketched Hessian matrix applied to a vector is implemented based on the QR factorization. Due to the inefficiency of the MATLAB function \textsf{fwht}, we adopt a C$++$-FWHT library whenever the application of the Hadamard transform is needed in the implementation of SLSE-FRS, IDS, and PCG, which can be found at \url{https://github.com/jeffeverett/hadamard-transform}. No other additional libraries were used in our numerical experiments.

\end{document}